\DeclareMathOperator{\Points}{\mathcal{C}}
\DeclareMathOperator{\Colors}{\mathcal{H}}
\newcommand{\OR}{\textbf{OR}}
\newcommand{\ML}{\textbf{ML}}
\newcommand{\DS}{\textbf{DS}}
\newcommand{\GF}{\textbf{CM}}
\newcommand{\SF}{\textbf{SF}}
\newcommand{\EQ}{\textbf{EQ}}
\newcommand{\WC}{\textbf{WC}}
\DeclareMathOperator{\POF}{\mathrm{PoF}}
\newcommand{\anomeps}{\mathnormal{c}}
\author[1,2]{John Dickerson}
\author[3]{Seyed A. Esmaeili}
\author[4]{Jamie Morgenstern}
\author[4]{Claire Jie Zhang}
\affil[1]{University of Maryland, College Park}
\affil[2]{Arthur}
\affil[3]{University of Chicago}
\affil[4]{University of Washington}
\begin{document}

\title{Fair Clustering: Critique, Caveats, and Future Directions}

\date{}  
\maketitle

\begin{abstract} 
Clustering is a fundamental problem in machine learning and operations research. Therefore, given the fact that fairness considerations have become of paramount importance in algorithm design, fairness in clustering has received significant attention from the research community. The literature on fair clustering has resulted in a collection of interesting fairness notions and elaborate algorithms. In this paper, we take a critical view of fair clustering, identifying a collection of ignored issues such as the lack of a clear utility characterization and the difficulty in accounting for the downstream effects of a fair clustering algorithm in machine learning settings. In some cases, we demonstrate examples where the application of a fair clustering algorithm can have significant negative impacts on social welfare. We end by identifying a collection of steps that would lead towards more impactful research in fair clustering. 
\end{abstract}
\section{Introduction}
Machine learning and algorithmic decision making are seeing widespread use in society, affecting the welfare of individuals in numerous and impactful ways from loan approval and hiring, to recidivism prediction and kidney exchange \cite{tejaswini2020accurate,arun2016loan,purohit2019hiring,berk2021fairness,liu2011comparison,berk2013statistical,awasthi2009online,aziz2021optimal,mcelfresh2019scalable,raghavan2020mitigating}. This has pushed fairness considerations to the forefront and instigated a large body of work in algorithmic fairness. Unsurprisingly, clustering being a classical problem in operations research and arguably the most fundamental problem in unsupervised learning has received significant attention from the research community that has resulted in tens of papers (see for an incomplete list \cite{fc_tutorial,chhabra2021overview,chierichetti2017fair,bercea2018cost,bera2019fair,abbasi2020fair,ghadiri2021socially,ahmadi2020fair_correlation,ahmadian2019clustering,anderson2020distributional,brubach2020pairwise,brubach2021fairness,backurs2019scalable,schmidt2018fair,esmaeili2020probabilistic}). Because of the impact of the problem and its widespread use, the emergent field of \emph{fair clustering} has the potential of being quite impactful. The field has generated interesting and elaborate notions of fairness and novel algorithms for solving them. Despite this progress, a collection of issues have been neglected. In this paper, we highlight and expand on a collection of important overlooked issues in fair clustering. We demonstrate that many of these issues are consequential for real life applications of fair clustering including cases where harm can possibly be caused because of fair clustering whereas an agnostic (fairness unaware) clustering would not result in such harm. 

Algorithmic fairness is still a developing field and it is therefore not difficult to point out shortcomings. Among the existing critiques, \citet{selbst2019fairness} discuss possible reasons for the failure of fair machine learning in large sociotechnical systems. More specifically, fair machine learning research is criticized as using abstractions to create homogeneous learning tasks taken out of their original contexts where researchers then provide standalone and portable solutions which are often misused. Further, \citet{holstein2019improving} highlights the disconnect between the challenges faced by practitioners and the support provided by fair machine learning researchers. Other problems pointed out that exist in almost all paradigms include ignoring long term consequences on welfare \cite{liu2018delayed} and the context where fairness is applied \cite{corbett2023measure}. Finally, there is work such as \citet{patro2022fair} that critiques fairness in a specific domain (ranking) similar to how we critique fairness in clustering. 






\paragraph{Contributions and Organization of the Paper.} We start in Section \ref{sec:review} by reviewing clustering. Specifically, we review and formally introduce the problem of clustering. We highlight the two fundamental applications of clustering, namely in operations research for facility location and in machine learning and data analysis for unsupervised learning. We then provide a brief review of the fair clustering literature. In Section \ref{sec:full_utility} we go through utility and welfare issues in fair clustering and show how welfare could possibly be degraded. Section \ref{subsec:ml_shortcomings} goes over the downstream effects of fair clustering in the machine learning pipeline and highlights many caveats. Section \ref{sec:datasets_practical} goes over dataset and practical application issues. Section \ref{sec:misc} goes over a collection of issues shared by all algorithmic fairness paradigms but with context specific to fair clustering. Finally, in Section \ref{sec:towards_more_impactful} we sketch a path and give suggestions on how to make more impactful work in fair clustering. 

\section{Review of Clustering and Fair Clustering}\label{sec:review}
We start by defining clustering concretely focusing on the most prominent centroid-based objectives\footnote{Note that there are many other variants of clustering including  hierarchical clustering \cite{james2013introduction,dasgupta2016cost}, correlation clustering \cite{bansal2004correlation,demaine2006correlation,zimek2009correlation}, and spectral clustering \cite{von2007tutorial,meila2016spectral}. Although, some have been considered in fair clustering, the centroid-based objectives have been more common, focusing on the centroid-based objectives makes our discussion more concrete. Further, most of our observations hold for these objectives as well.}. Consider a set of points $\Points$ with a distance function $d :\Points^2 \xrightarrow[]{} \mathbf{R}_{\ge 0}$ which defines a metric over the points, then a $k$-clustering chooses a set of at most $k$ centers $S$ ($|S|\leq k$) and an assignment function $\phi: \Points\xrightarrow{} S$ (from points to centers) so as to minimize one of the following clustering objectives: 
\begin{align}
     \text{$k$-center:} & \ \ \min\limits_{S,\phi} \ \max\limits_{j \in \Points} d(j,\phi(j))   \label{eq:kcenter} \\ 
     \text{$k$-median:} & \ \ \min\limits_{S,\phi} \ \sum\limits_{j \in \Points} d(j,\phi(j))   \label{eq:kmedian} \\ 
     \text{$k$-means:}  & \ \ \min\limits_{S,\phi} \ \sum\limits_{j \in \Points} d^2(j,\phi(j)) \label{eq:kmeans} 
\end{align}
Note that in the ordinary (unconstrained) clustering setting $\phi$ simply assigns each point to its closest center but when constraints are imposed on the optimization, points maybe assigned to further away centers to satisfy the constraint. Most notions in fair clustering impose a constraint on the clustering objective making the assignment function $\phi$ non-trivial to find. Further, we emphasize in the above that the set of centers $S$ has a cardinality that is upper bounded by $k$ and not necessarily equal to $k$. 
\subsection{Two Perspectives in Clustering: Operations Research vs Machine Learning}\label{subsec:review_cluster}
There are two fundamental perspectives in clustering which have two distinctly different motivations. In fact, these two motivations have developed in two different communities, namely Operations Research (\OR{}) and Machine Learning (\ML{}). We present an overview of these two motivations and how they differ from one another.   

\paragraph{Operations Research (\OR{}):} In Operations Research, clustering is often referred to as the facility location problem where it dates back to at least the sixties and remains an active area of research \cite{kuehn1963heuristic,hakimi1964optimum,balinski1965integer,galvao1989method,davis1969branch,cabot1970network,francis1967some,vergin1967algorithm,farahani2010multiple,ahmadi2017survey,boonmee2017facility,laporte2019introduction,cui2010reliable,arabani2012facility}. In the \OR{} setting, points represent individuals (or clients) and clustering is used to open a collection of facilities (centers) such as warehouses, fire-stations, hospitals, or schools to service the clients. For an individual $j$ and a clustering solution $(S,\phi)$, one can think of the distance between $j$ and its assigned center $d(j,\phi(j))$ as a measure of $j$'s disutility. Interestingly, this implies that the $k$-center problem \eqref{eq:kcenter} minimizes the max-min or Rawlsian objective \cite{rawls1958justice} whereas the $k$-median \eqref{eq:kmedian} minimizes the utilitarian objective \cite{brandt2016handbook,feldman2006welfare}. Note that as one would expect in the \OR{} setting --even when fairness issues are ignored-- many variants of the problem can be introduced to accommodate well-motivated practical considerations such as imposing an upper bound on the total number of individuals serviced by a facility due to capacity issues \cite{khuller2000capacitated}. Further, it is possible that different choices for the centers would lead to different costs and therefore we would modify the function to be minimized by including a term for the the cost of opening the centers \cite{cornuejols1983uncapicitated}. However, we have focused on the objectives in \eqref{eq:kcenter}, \eqref{eq:kmedian}, and \eqref{eq:kmeans} without further additions for ease of exposition and since these are the objectives which have been predominantly considered in fair clustering.




\paragraph{Machine Learning (\ML{}):} Whereas the purpose of clustering in \OR{} is clear and amounts to minimizing the clustering objective, the purpose in \ML{} is more complicated and ill-defined \cite{shalev2014understanding}. Specifically, in \ML{} clustering is used for unsupervised learning to reveal the structure in the dataset and group similar points together and separate faraway ones. In clustering paradigms which minimize a clustering cost function such as the $k$-means, the clustering cost is only a proxy for revealing the structure of the dataset rather than the end objective. Because the desired objective is ill-defined, various different paradigms  were introduced in the \ML{} clustering literature such as hierarchical clustering \cite{james2013introduction,dasgupta2016cost}, centroid-based clustering (such as $k$-\{center, median, means\}), and spectral clustering \cite{von2007tutorial}. In fact, the remarkable work of \citet{kleinberg2002impossibility} lays down simple and desirable properties one would wish in a clustering paradigm and shows that it is impossibly to satisfy all of them simultaneously. Furthermore, the works of \citet{ben2018clustering} and \citet{von2012clustering} give deep critiques and shortcomings in clustering. We quote the following from \citet{ben2018clustering}:  
\begin{quote}
``different algorithms may yield to dramatically different outputs for the same input sets. In contrast
with other common learning tasks, like classification, clustering does not have a well defined ground truth.''
\end{quote}
Our point here is not that clustering does not provide great utility in machine learning and data analysis. However, it does imply that the ambiguity in clustering in \ML{} can cause the application of fair clustering to have unintended downstream effects that possibly nullify the application of the fair clustering algorithm or even degrade the utilities of the individuals. Section \ref{subsec:ml_shortcomings} discusses these potential \ML{} specific pitfalls. 

\subsection{Brief Review of Fair Clustering}\label{subsec:review_of_fc}
Because of the vast growth in the fair clustering literature it is not easy to give a complete view of all of the work. Therefore, we will give concrete definitions to a sample of the fairness notions that will be relevant for the subsequent parts of the paper. In the case of group fairness, the notions we list below have all received significant attention in the literature.  

First, we introduce some further notation. Let $\Colors$ be the set of all groups (colors) which the given set of points in the dataset $\Points$ belong to. Associate with each point $j \in \Points$ a color $\chi(j) \in \Colors$ which denotes its group membership. For simplicity, we assume that each point belongs to only one group. We now give concrete definitions to some fairness notions. 

\paragraph{Proportional Color Mixing (\GF{}):} This is the most prominent notion in group fair clustering \cite{chierichetti2017fair,bera2019fair,bercea2018cost,ahmadian2019clustering}. The notion constrains the solution to have a proportional representation of the different groups (colors) in each cluster. Since different clusters can have different outcomes associated with them, the proportional representation constraint enforces the notion of disparate impact \cite{feldman2015certifying,CivilRights1991}. In its most general form, \GF{} states that for any center $i \in S$ the following constraint should be satisfied: 
\begin{align}
  \forall h \in \Colors: l_h |C_i| \leq |C^h_i| \leq u_h |C_i|  
\end{align}
where $l_h$ and $u_h$ are proportion bounds and $0\leq l_h \leq u_h \leq 1$. Further, $C_i$ is the set of points assigned to center $i$ and $C^h_i$ is the subset of color $h$. A reasonable choice for the bounds $l_h$ and $u_h$ is to be close to the proportion of color $h$ in the dataset. For example, if half of the dataset is red, then we may set $l_{\text{red}}=0.4$ and $u_{\text{red}}=0.6$. 



\paragraph{Socially Fair Clustering (\SF{}):} This notion is motivated by the disparity in the clustering cost function across the groups. I.e., it is possible that a clustering solution (even if optimal) would be small for one group and large for another. To fix this issue, the works of \citet{makarychev2021approximation,abbasi2020fair,ghadiri2021socially} introduce and solve the following clustering objective: 
\begin{align}
    \max\limits_{h \in \Colors} \frac{1}{|\Points^h|} \sum_{j \in \Points^h} d^p(j,\phi(j)) \label{eq:sf_ob}
\end{align}
where $p=1$ and $2$ for the $k$-median and $k$-means, respectively. 
Note that this fairness notion is stated as a minimization problem without constraints. A solution to such a \SF{} formulation has an objective value that is an multiplicative approximation to optimal solution of the same problem, $\frac{1}{|\Points^h|} \sum_{j \in \Points^h} d^p(j,\phi(j)) \leq \beta \cdot \frac{1}{|\Points^h|} \sum_{j \in \Points^h} d^p(j,\phi^*(j))$. Thus the optimization problem can be equivalently viewed as a constrained problem,  where solutions with small $\beta$ are sought after.

Although there has been work on individual fairness notions in clustering, most of the research in fair clustering had been focused on group fairness notions. We will include a specific notion of equitable fairness that was introduced in \citet{chakrabarti2022new}.  
\paragraph{Equitable Distance Fairness (\EQ{}):} As the name suggests the motivation behind this notion is to guarantee an upper bound on the utility variation between different points. More concretely, each point $j \in \Points$ has a set $S_j \subset \Points$ associated with it and a solution is considered $\alpha$-equitably fair\footnote{Actually, this notion is formally called per-point equitable in 
\citet{chakrabarti2022new} as opposed to average equitable where the average of the distances in the similarity set instead of the minimum is taken in equation \eqref{eq:equitable_def}. We focus on per-point equitable fairness for the sake of clarity and ease of representation.} if the following holds: 
\begin{align}
    \forall j \in \Points: d(j,\phi(j)) \leq \alpha \min\limits_{j' \in S_j} d(j',\phi(j')) \label{eq:equitable_def}
\end{align}

Finally, we point out that for a given instance and a given fairness constraint $c$ (e.g. $c$ could be \GF{} or \EQ{}), the price of fairness (PoF) is defined as $\POF = \frac{\text{Cost of Optimal Solution Satisfying Constraint $c$}}{\text{Cost of Optimal Agnostic Solution}}$. Accordingly, the PoF measures the degradation in the clustering cost due to imposing the fairness constraint.   
\section{How Does Fair Clustering Affect Utility and Welfare?}\label{sec:full_utility} 
A large collection of papers have shown that welfare considerations are of critical importance in fairness settings, i.e. how an algorithm that is purported to be fair would affect the utilities of the individuals \cite{heidari2019long,hu2020fair,mladenov2020optimizing,chen2021fairness,heidari2018fairness,chohlas2021learning}. In fact, \citet{liu2018delayed} and \citet{chohlas2021learning} show that the application of a ``fair'' algorithm could potentially cause harm when the full interaction between the algorithm and the individuals is not taken into account. Following this observation in a clustering setting, we show how the application of various fair clustering notions could potentially cause harm by assuming a very simple and reasonable utility model. In fact, we do not even assume a specific algebraic relation for the utility, only the form of the dependence.

Following the standard model of fair clustering, we treat each point in clustering as an individual. From Subsection \ref{subsec:review_cluster} it is clear that the utility of a point is improved if the distance from its assigned center is made smaller, this holds in both the \OR{} and \ML{} perspectives. From the \OR{} perspective, being closer to the center means that the travel distance is shorter while from the \ML{} perspective being closer to the center means the center is more representative of the point since distance in the \ML{} settings is a measure of dissimilarity. Furthermore, different centers (clusters) can have different outcomes (of varying qualities) associated with them\footnote{\citet{esmaeili2022fair} refers to these different outcomes as ``labels.''}. For example, in \OR{} centers (which may represent schools or facilities) could provide services of different levels of quality \cite{xu2008approximation,shmoys2004facility}. This is the case in the \ML{} setting as well;  consider the use of clustering for a market segmentation application where different clusters could advertise for jobs of varying levels of payment \cite{esmaeili2022fair,aggarwal2004method,chen2012data,han2011data,tan2018introduction}. Furthermore, the outcome of the center (cluster) may not be fixed but may depend on the set of points assigned to it. For example, if the centers represent schools then an assignment of points that is more diverse across demographic groups would be more preferable \cite{bercea2018cost}.   


Therefore, in general the utility a point gains in a clustering $(S,\phi)$ can be reasonably approximated by: 
\begin{align}
    u_j(S,\phi) = f_j\Big(d\big(j,\phi(j)\big), L\big(\phi,j\big)  \Big) \label{eq:util_of_clustering}
\end{align}
Where $f_j$ is a two-input function. $L(\phi,j)$ is the outcome associated with the center (cluster). Importantly, for a fixed value of $L(\phi,j)$, $f_j$ is a \emph{decreasing} function in $d\big(j,\phi(j)\big)$. Notice the subscript $j$ in $f_j$ which implies that in general different points (individuals) can have different preferences which is an important consideration as pointed out by \citet{finocchiaro2021bridging}. The welfare of all individuals could then be aggregated using the utilitarian objective \cite{brandt2016handbook,feldman2006welfare}, which would be the sum of the utilities of the individuals leading to:
\begin{align}
    U(S,\phi) = \sum_{j \in \Points} u_j(S,\phi) \label{eq:util_of_clustering_group_full}
\end{align}

The welfare of a specific group (color) is dependant on the utilities of its points. Accordingly, a specific group $h$ would have the following average welfare: 
\begin{align}
    U_h(S,\phi) = \frac{1}{|\Points^h|} \sum_{j \in \Points^h} u_j(S,\phi) \label{eq:util_of_clustering_group}
\end{align}
We are not aware of a fair clustering formulation that quantifies welfare with the exception of \citet{abbasi2023measuring}. However, the work of \citet{abbasi2023measuring} is focused on a specific application and does not consider the outcome heterogeneity that could exist between different centers (i.e., some centers being better than others). Further, our objective here is general as we intend to show how the introduced fairness notions would effect welfare in light of the above model. Accordingly, we will discuss a collection of ignored issues that surface once utility considerations are more carefully taken into account. 

In the following subsections, we will show through illustrative examples how welfare could be degraded because of the application of a fair clustering algorithm. Specifically, we show that the entire welfare $U$ could be degraded and that also sometimes it  could be degraded for a specific group (possibly the protected group). In fact, multiple optimal fair solutions could exist that result in different distribution of welfare across the groups. We demonstrate these observations for specific fairness constraints through simple examples with a small number of points and two colors but similar observations can be extended to more complicated examples with more points and colors and for other fairness constraints. Note in the examples that when we impose the \GF{} constraint, we only have two colors (blue and red) and for simplicity set the upper and lower bounds in \GF{} equal to $\frac{1}{2}$.  


\subsection{Fair Clustering Could Degrade Welfare}\label{subsec:welfare_deg} 
\citet{chierichetti2017fair} (arguably the founding paper of fair clustering) indicates that points maybe assigned to further away centers to satisfy the fairness constraints. Therefore, in light of the utility model of \eqref{eq:util_of_clustering} which shows that welfare is dependent on both the distance from the center as well as the outcome associated with it, it is worthwhile to wonder if the welfare is actually degraded by the application of fair clustering algorithms since fairness constraints are myopic and do not have a full view of  the utility.


\paragraph{\GF{} Constraint Example.} See Figure \ref{fig:welfare_ex_gf_1} which contrasts the agnostic (unfair) clustering output with the \GF{} clustering output over a set of points that belong to two groups. Agnostic clustering leads to clusters $C_1$ and $C_3$ to be composed entirely of blue points. Therefore, the outcome associated with clusters $C_1$ and $C_3$ which could be of higher quality will not bring any benefit to the red group since no red point is included in them. Even if all of the clusters had the same outcome associated with them, the lack of diversity (group-wise) in clusters $C_1$ and $C_3$ is not satisfactory especially in an \OR{} application like school assignment. The \GF{} fairness constraint was motivated by such examples and imposing it would lead to an output where all groups are well-represented in each cluster. However, as can be seen from the output most blue and red points have to travel a much larger distance in the new \GF{} clustering. As the distance becomes sufficiently large, one can conclude that the welfare of the blue group $U_{\text{blue}}$ as well as that of the red group $U_{\text{red}}$ has indeed been degraded because of the application of the \GF{} constraint and accordingly the entire welfare $U$ has been degraded.

\begin{figure}[h!]
   \centering
   \includegraphics[scale=0.6]{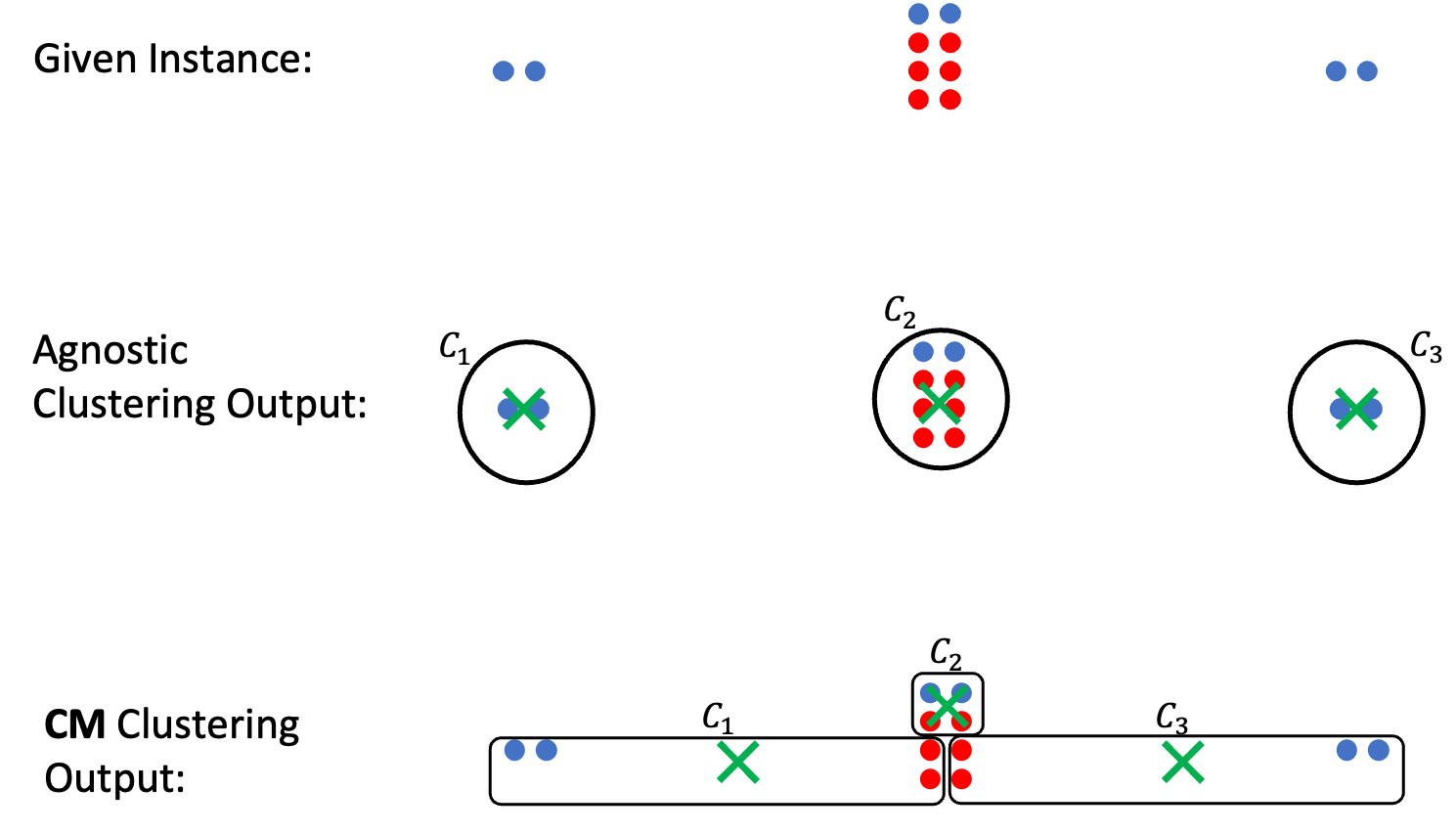}
   \caption{The figure shows an instance with the agnostic vs the \GF{} clustering output. Note that centers are labeled by a green marker \textbf{X}.}
   \label{fig:welfare_ex_gf_1}
\end{figure}

Note that such a behaviour could happen in real life. In particular, in the case of clustering for school assignment while we would end up with a balanced group representation in each school, the distance travelled by many students could be quite large. Given that racial memberships correlate significantly with geographic location \cite{fiscella2006use,long2021use}, this issue is practically well-motivated.




\paragraph{\EQ{} Constraint Example.} Here we consider the \EQ{} constraint which is an individual fairness constraint and assume a clustering that only chooses centers from the given set of points. See Figure \ref{fig:welfare_ex_equitable} where agnostic clustering recovers the true structure in the dataset, clustering nearby points and separating ones that are far away from each other. However, equitable clustering results in a very different clustering. First, we note that points 1 and 2 are in each other's similarity sets and likewise points 4 and 5 are in each others similarity sets whereas point 3 is only similar to itself. Note further each of the pair $\{1,2\}$ and $\{4,5\}$ are at a small distance of $\epsilon$ from each other. One can verify from the definition of equitable clustering as shown in Inequality \eqref{eq:equitable_def}) that the \EQ{} solution show in \ref{fig:welfare_ex_equitable} is optimal for the $k$-center objective. Note however, that it does not allow points $\{1,2\}$ or $\{4,5\}$ to form a cluster and instead all are assigned to point $3$ in the middle forming only one cluster. As a result the point-to-center distances become much larger and similar to the \GF{} example the welfare $U$ could indeed be degraded when compared to the agnostic clustering. 
\begin{figure}[h!]
   \centering
   \includegraphics[scale=0.6]{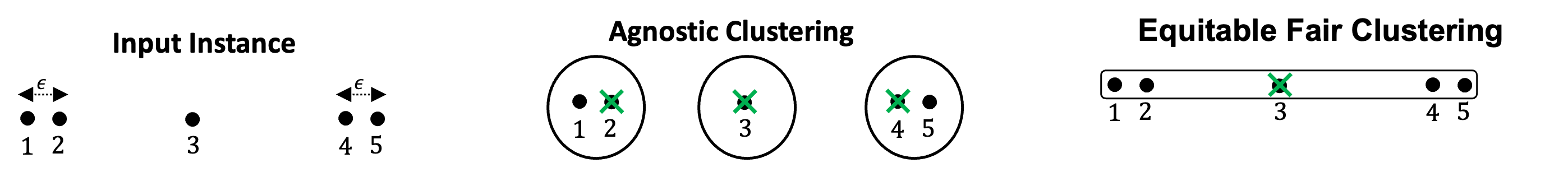}
   \caption{The figure shows an instance with the agnostic vs the \EQ{} clustering output. Note that centers are labeled by a green marker \textbf{\textcolor{green}{X}}..}
   \label{fig:welfare_ex_equitable}
\end{figure}

\paragraph{\SF{} Constraint Example.} In this example we will consider the \SF{} constraint which ignores the outcome associated with the cluster and the within cluster diversity level and show how agnostic clustering could lead to a higher welfare. See the example of Figure \ref{fig:ex_socially_fair} where the application of agnostic clustering leads both clusters to have population-level proportional representation of each group. This implies that both clusters have a good level of diversity and that the different groups will attain the same outcome associated with each cluster in equal proportions. That is not the case however when applying the socially fair clustering notion of \cite{abbasi2020fair,ghadiri2021socially,makarychev2021approximation}. If the top centers (which only includes blue points in the socially fair case) receive an outcome that is highly desirable, then it is possible that the red points at the bottom would gain a higher utility from the application of an agnostic instead of a socially fair clustering since none of them are included in a top center in the \SF{} solution.

\begin{figure}[h!]
   \centering
   \includegraphics[scale=0.6]{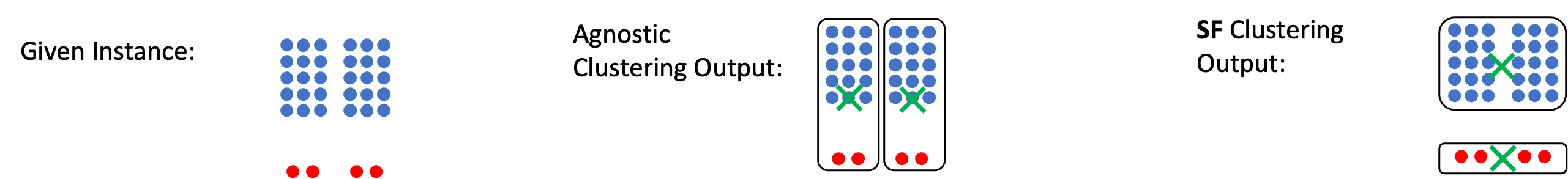}
   \caption{The figure shows an instance with the agnostic vs the \SF{} clustering output. Note that centers are labelled by a green marker \textbf{\textcolor{green}{X}}.}
   \label{fig:ex_socially_fair}
\end{figure}

\paragraph{Additional Remarks:} Note that all of the examples mentioned are not pathological clustering examples (from the agnostic prospective). In fact, in the case of the \GF{} constraint (Figure \ref{fig:welfare_ex_gf_1}) and \EQ{} constraint (Figure \ref{fig:welfare_ex_equitable}) the clusters consist of points close to each other with high inter cluster separation.    
Moreover, the literature has mostly ignored such issues and while it is true that when a notion of fair clustering is used, the price of fairness ($\POF$) is usually measured. The $\POF$ is measured according to the degradation in the clustering cost, not the degradation in overall utility for the individuals or groups. 
Further, a more rigorous and justified approach to fair clustering would consider both distance and outcome simultaneously. Finally, it is possible that one may encounter a situation where both considerations are important to take into account to improve welfare. For example, points would be routed to centers further away only if the centers are not too far or if the outcome associated with the center is preferred. In fact, a much more preferred method would give a clear and well-justified description of the function $f_j\Big(d\big(j,\phi(j)\big), L\big(\phi,j\big)  \Big)$ in equation (\ref{eq:util_of_clustering}).

\subsection{Inequitable Welfare Degradation}\label{subsec:deg_focused}
Here we show a perhaps surprising issue which is that a fair clustering solution could be unfair when it comes to the degradation in the
the welfare. At the extreme, a fair clustering solution may assign points of a specific group a large distance to their center while  points from other groups can have their distance essentially unchanged. This issue has not be highlighted in the literature and in fact the degradation in clustering cost ($\POF$) is mostly never measured for each group separately. Figure \ref{fig:ex_2_agnostic} shows an interesting example where imposing the \GF{} constraint could lead to two different optimal solutions. However, in the first solution red points are assigned to further away centers while in the second solution blue points are instead assigned to further away centers. Although, this is an extreme example, one can show other examples where the degradation in the clustering cost for one group is higher than the other and for fairness notions other than \GF{}. The fact that the welfare degradation may not be equitable across the groups puts into question the fairness of the solution. 
\begin{figure}[h!]
   \centering
   \includegraphics[scale=0.4]{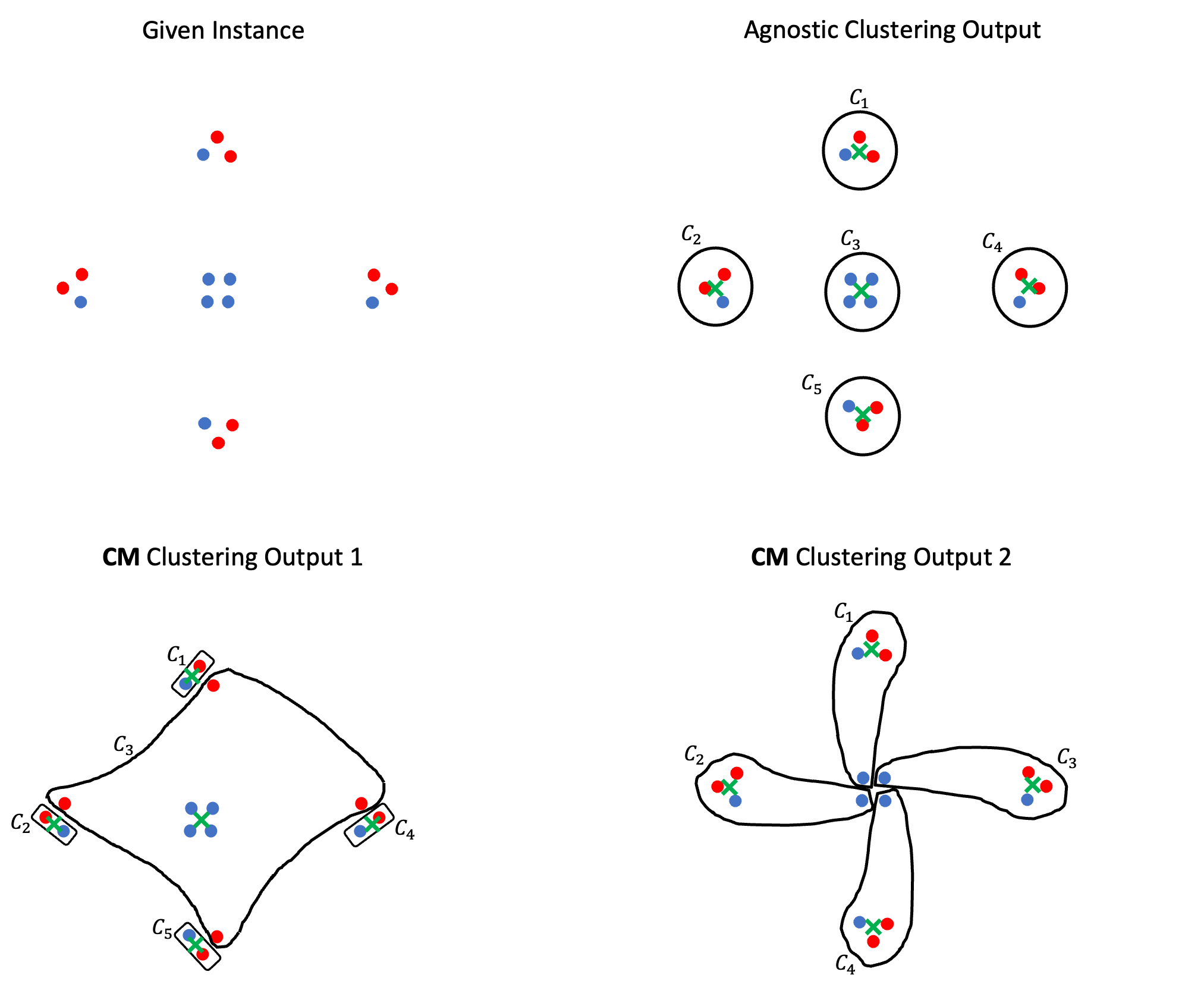}
   \caption{In this example points which are nearby points (the four triads and the four blue point middle points) are separated by a small distance of $\epsilon$ whereas every other distance between any two points is at least $R \gg \epsilon$. Although the two \GF{} clustering solutions in the bottom row have approximately equal clustering cost they result in different distance assignments for the red and blue groups. The first is favorable to the blue group wherease the second is favorable to the red group.}
   \vspace{-0.2cm}
   \label{fig:ex_2_agnostic}
\end{figure}

\subsection{Maximizing Welfare: Going Beyond Simple Constraints}
Building on the previous discussion we show a natural example (see Figure \ref{fig:welfare_example}) where the individuals value within cluster diversity as well as short travel distance. However, the trade-off between diversity and travel distance varies across different regions. Specifically, in one region diversity can be achieved at the expense of a short travel distance whereas in another it can only happen at the expense of a large travel distance. Therefore, a \GF{} and an \SF{}\footnote{Note that the optimal \SF{} clustering in this example would also be equal to the agnostic clustering} clustering would result in sub-optimal utility for each group and overall. Whereas this would not be the case using a welfare-centric notion (which maximizes \eqref{eq:util_of_clustering_group}) since it would essentially give a \GF{} clustering where the diversity expense is small and an \SF{} clustering where the diversity expense is large. This highlights a draw back in using simple fairness notions. For completeness, we establish this formally in the form of the theorem shown below. Note that the theorem holds under a reasonable choice for the utility $u_j(S,\phi)=f_j(d(j,\phi(j)),L(\phi,j))$.   
\begin{figure}[h!]
    \centering
    \includegraphics[width=1\linewidth]{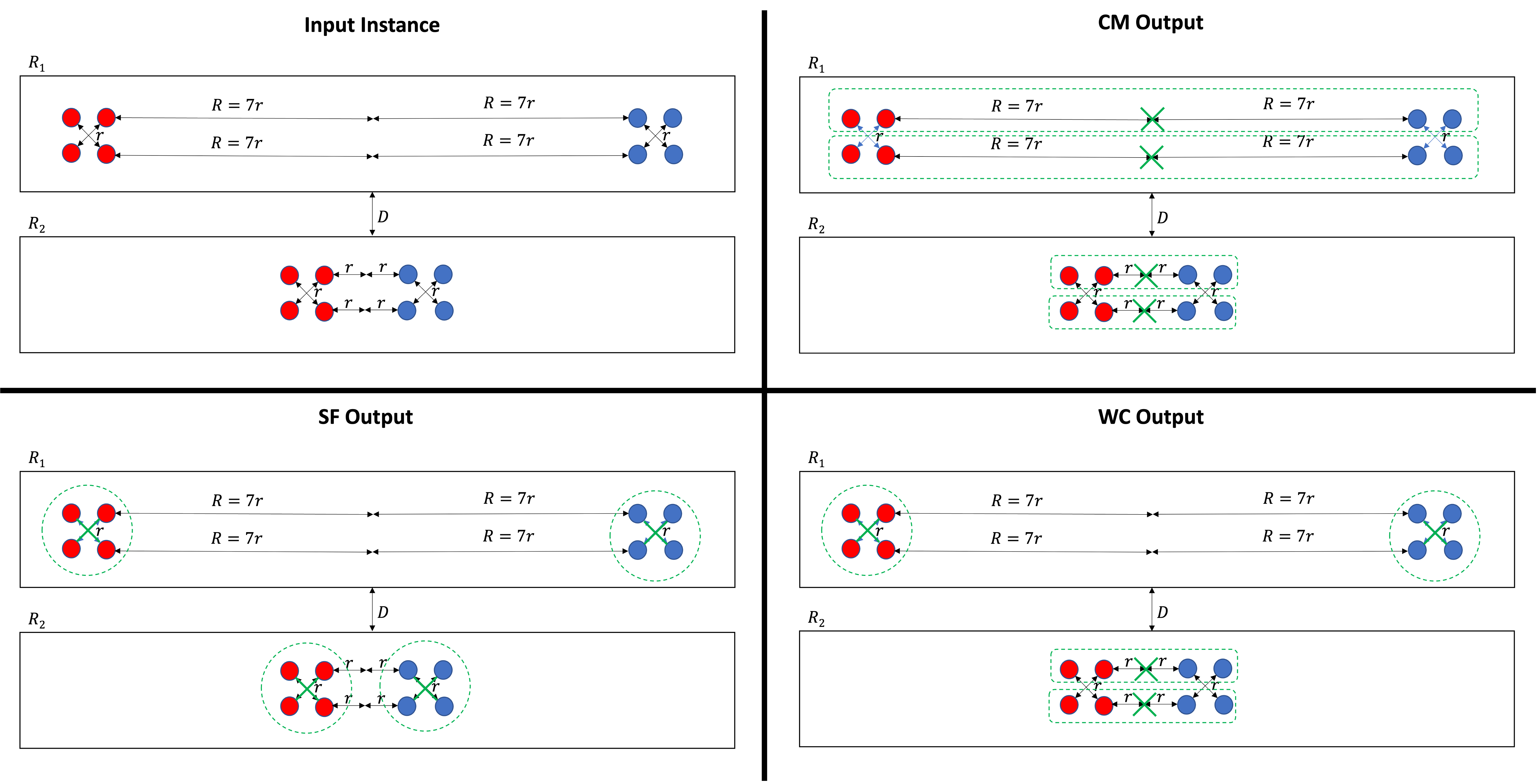}
    \caption{The figure shows the input instance consisting of two regions $R_1$ and $R_2$ separated by a very large distance $D$ ($D$ is shown smaller in the figure to save space). The resulting clusterings for \GF{}, \SF{}, and the welfare-centric \WC{} clusterings are all shown with the clusters enclosed by dashed lines and centers with green \textcolor{green}{\textbf{X}}. Note how \WC{} gives the most natural solution which is a mixture of both \GF{} and \SF{}, achieving diversity only when it comes at a reasonable expense.}
    \label{fig:welfare_example}
\end{figure}

\begin{restatable}{theorem}{welfareExample} \label{thm:welfare_example}
In the instance shown in Figure \ref{fig:welfare_example}, for the $k$-median problem with $k=4$ a \GF{} or an \SF{} clustering would have an average utility of at most $2r$ for each group whereas a welfare-centric clustering would result in an average utility of at least $3 r$ where $r$ is a positive number. 
\end{restatable}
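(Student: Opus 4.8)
The plan is to prove the statement constructively: I fix one concrete ``reasonable'' utility of the required form and then verify the three averages directly on the instance of Figure~\ref{fig:welfare_example}. I take the separable utility $u_j(S,\phi)=f_j\big(d(j,\phi(j)),L(\phi,j)\big)=L(\phi,j)-d(j,\phi(j))$, which is decreasing in $d(j,\phi(j))$ for fixed $L$ exactly as the model \eqref{eq:util_of_clustering} demands, and I let the outcome $L(\phi,j)$ take a high value $L_{\mathrm{high}}$ when $j$'s cluster is color-balanced (diverse) and a low value $L_{\mathrm{low}}$ when it is monochromatic, encoding the premise that within-cluster diversity improves outcomes. From the figure I read off the per-point travel distances: monochromatic clusters cost a negligible distance in both regions, a diverse cluster costs a short per-point distance $r$ in the cheap-diversity region $R_1$ and a large per-point distance in the expensive region $R_2$, each region holds equally many points of each color, and the regions are separated by $D\to\infty$. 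The first step is a locality lemma: because $D$ dwarfs every intra-region distance and $L$ is bounded, no cost-optimal \GF{} or \SF{} solution and no welfare-maximizing solution ever assigns a point across regions, so the budget $k=4$ splits as two centers per region and each region can be analyzed on its own.

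With locality in hand I characterize the solutions. The monochromatic assignment gives every point distance $0$, so it simultaneously minimizes the agnostic $k$-median cost and the socially fair objective \eqref{eq:sf_ob} (both equal $0$); hence \SF{} coincides with the agnostic clustering, as the caption asserts, and each group's average utility is just the low outcome $L_{\mathrm{low}}$. The balance constraint $l_h=u_h=\tfrac12$ instead forces every \GF{} cluster to be half red and half blue, so points are routed to diverse clusters in \emph{both} regions; the short distance in $R_1$ keeps utility high there, but in $R_2$ the large diverse-cluster distance cancels the diversity bonus, so the \GF{} per-group average is a balanced mean of a high $R_1$ term and a low $R_2$ term.

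For the welfare-centric (\WC{}) clustering I maximize $U(S,\phi)=\sum_{j\in\Points}u_j(S,\phi)$ over the finitely many region-respecting configurations left by the locality lemma. Within each region the only relevant choices are the monochromatic and the diverse clustering, so I compare welfare region by region: in $R_1$ the diversity bonus exceeds the short distance $r$, so the diverse clustering wins, whereas in $R_2$ the large distance outweighs the bonus, so the monochromatic clustering wins. The welfare-optimal clustering is therefore diverse in $R_1$ and monochromatic in $R_2$ --- precisely the announced hybrid of \GF{} and \SF{} --- and, by symmetry of the two colors, each group's average is the mean of the high $R_1$ utility and the moderate monochromatic $R_2$ utility, strictly above the value shared by \GF{} and \SF{}.

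The delicate part is the optimality, not the arithmetic. The main obstacle is excluding the configurations that locality alone does not rule out: I must check that no unequal split of the four centers (for instance three in one region) and no use of larger clusters beats the claimed \GF{}, \SF{}, and \WC{} optima, which I dispatch by the short exhaustive enumeration that becomes available once clusters are confined to a single region. What then remains is to pin the two free outcome levels and the two diverse-cluster distances so the three per-group averages hit the claimed values; for example $L_{\mathrm{low}}=2r$, $L_{\mathrm{high}}=5r$, an $R_1$ diverse distance of $r$, and an $R_2$ diverse distance of $5r$ give \SF{} and \GF{} exactly $2r$ per group and \WC{} exactly $3r$, with any larger $R_2$ distance making the \GF{} bound strict.
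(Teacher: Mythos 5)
Your construction is essentially the paper's. The paper also proves the theorem by fixing one concrete utility of the form \eqref{eq:util_of_clustering}, namely $u_j(S,\phi) = \big(3r - d(j,\phi(j))\big) + 3r\cdot\min\big\{ |S^{\text{red}}_{\phi(j)}|/|S^{\text{blue}}_{\phi(j)}|,\, |S^{\text{blue}}_{\phi(j)}|/|S^{\text{red}}_{\phi(j)}|\big\}$, which on this instance is exactly your additive $L - d$ with a binary diversity bonus of $3r$, and it likewise bounds per-point utilities region by region and averages using the color symmetry. One structural difference matters: you redefine \WC{} as maximizing the utilitarian sum $U(S,\phi)$, whereas the paper's \WC{} maximizes $\min_{h\in\Colors} U_h(S,\phi)$. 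Under the paper's max-min definition, the $3r$ lower bound follows from merely \emph{exhibiting} the hybrid clustering as feasible: the optimum's min over groups dominates the witness's value, and each group's average dominates that min. So the paper needs no locality lemma and no enumeration over center splits for the \WC{} side, while under your sum objective you genuinely do need the optimality and symmetry verification you describe --- your route is heavier exactly where the paper's is a one-line witness argument. (Conversely, your locality-plus-enumeration treatment of the \GF{} and \SF{} sides is more careful than the paper, which reads the relevant distances directly off the figure and bounds the fair solutions' utilities without formally excluding cross-region assignments or unequal center splits.)

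There is also a concrete numeric mismatch, since the instance is fixed by Figure \ref{fig:welfare_example} and its distances are not free parameters. Per the paper's proof, the monochromatic assignment costs distance $r$ per point (not $0$ as you assume), the cheap diverse cluster costs at most $2r$, and the expensive one costs $R = 7r$ (and the expensive-diversity region is $R_1$, not $R_2$ --- a cosmetic swap). With these actual distances, your choice $L_{\mathrm{low}} = 2r$, $L_{\mathrm{high}} = 5r$ gives the hybrid clustering a per-group average of only $\frac{(2r - r) + (5r - 2r)}{2} = 2r$, so your stated \WC{} bound of $3r$ fails on the paper's instance. The repair is the shift the paper effectively makes, $L_{\mathrm{low}} = 3r$ and $L_{\mathrm{high}} = 6r$: then the hybrid yields $2r$ in the monochromatic region and at least $4r$ in the diverse one, averaging to $3r$, while the \GF{} bound becomes $\frac{(6r-7r) + (6r-r)}{2} = 2r$ and the \SF{} bound $6r - \ldots$ reduces to $3r - r + 0 = 2r$ per point, exactly as in the paper. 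Since you explicitly framed the constants as parameters to be pinned at the end, this is a calibration error caused by guessing the geometry rather than a conceptual flaw, but as written your arithmetic does not establish the theorem for the instance the statement refers to.
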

\begin{proof}
\textbf{Setting the utility Value:} First, we define the utility of a point $j$. We set the utility to the following 
\begin{align}
    u_j(S,\phi) = \Big( 3r-d(j,\phi(j)) \Big) + \Big( 3r \cdot \min\{ \frac{|S^{\text{red}}_{\phi(j)}|}{|S^{\text{blue}}_{\phi(j)}|} , \frac{|S^{\text{blue}}_{\phi(j)}|}{|S^{\text{red}}_{\phi(j)}|} \} \Big)
\end{align}
Now, we highlight some details about the utility. The first term $\Big( 3r-d(j,\phi(j))\Big)$ is for the distance and is non-negative as long as $d(j,\phi(j)) \leq 3r$. The second term is concerned with the diversity in the cluster, note that $S_{\phi(j)}$ is the cluster point $j$ is assigned to and $S^{\text{red}}_{\phi(j)}$ and $S^{\text{blue}}_{\phi(j)}$ are the subset of red and blue points within the cluster, respectively. Further, $\min\{ \frac{|S^{\text{red}}_{\phi(j)}|}{|S^{\text{blue}}_{\phi(j)}|} , \frac{|S^{\text{blue}}_{\phi(j)}|}{|S^{\text{red}}_{\phi(j)}|} \}$ is a measure of diversity within the cluster obtaining a maximum value of $1$ when the red and blue points are equally represented and a minimum value of $0$ when the cluster consists of only one group. The $3r$ is a scaling parameter for the diversity, hence the final value of the second term is $\Big( 3r \cdot \min\{ \frac{|S^{\text{red}}_{\phi(j)}|}{|S^{\text{blue}}_{\phi(j)}|} , \frac{|S^{\text{blue}}_{\phi(j)}|}{|S^{\text{red}}_{\phi(j)}|} \} \Big)$.

\textbf{Upper bound on the utility of \GF{} clustering:}
The upper bound on the utility for any point $j$ in $R_1$ for a \GF{} clustering is 
\begin{align}
     u_j(S_{\GF},\phi_{\GF})      & \leq \Big(3r- R \Big) + \Big( 3r \cdot 1 \Big) \\ 
                                  & = \Big(3r-7r \Big) + \Big( 3r \Big) \\ 
                                  & = -r 
\end{align}
Now for any point $j$ in $R_2$ the upper bound is 
\begin{align}
    u_j(S_{\GF},\phi_{\GF})      & \leq \Big(3r-r\Big) + \Big( 3r \cdot 1 \Big) = 5r 
\end{align}
Since both regions have an equal number of points from each group the average is at most 
\begin{align}
    \frac{-r + 5r}{2} = \frac{4r}{2} = 2 r 
\end{align}
Therefore, we have 
\begin{align}
    U_{\text{red}}(S_{\GF},\phi_{\GF}) , U_{\text{blue}}(S_{\GF},\phi_{\GF}) \leq 2r
\end{align}
\textbf{Upper bound on the utility of \SF{} clustering:}
The upper bound on the utility of an \SF{} clustering for any point $j$ in $R_1$ or $R_2$ is 
\begin{align}
     u_j(S_{\SF},\phi_{\SF}) \leq \Big(3r-r\Big) + \Big( 3r \cdot 0 \Big) = 2r 
\end{align}
Therefore, we have 
\begin{align}
    U_{\text{red}}(S_{\SF},\phi_{\SF}) , U_{\text{blue}}(S_{\SF},\phi_{\SF}) \leq 2r
\end{align}

\textbf{Lower bound on the utility of the welfare-centric clustering:}
The welfare-centric clustering \WC{} on the other hand would maximize the following objective:
\begin{align}
    \max\limits_{S,\phi} \min\limits_{h \in \Colors} U_h(S,\phi) 
\end{align}
where $U_h(S,\phi) =\frac{1}{|\Points^h|} \sum_{j \in \Points^h} u_j(S,\phi)$ as defined in \eqref{eq:util_of_clustering_group}. I.e., \WC{} maximizes the minimum average utility across groups. \WC{} has the same clustering as \SF{} in the first region $R_1$ and the same clustering as \GF{} in the second region $R_2$. In the first region $R_1$ the utility of any point $j$ will be 
\begin{align}
     u_j(S_{\WC},\phi_{\WC}) = \Big(3r-r\Big) + \Big( 3r \cdot 0 \Big) = 2r 
\end{align}
In the second region the utility of a point will be at least 
\begin{align}
     u_j(S_{\WC},\phi_{\WC}) \ge \Big(3r-2r\Big) + \Big( 3r \cdot 1 \Big) = 4r 
\end{align}
This makes the average utility of any group at least  
\begin{align}
    \frac{2r+4r}{2} = \frac{6r}{2} = 3r 
\end{align}
Therefore, we have 
\begin{align}
    U_{\text{red}}(S_{\WC},\phi_{\WC}) , U_{\text{blue}}(S_{\WC},\phi_{\WC}) \ge 3r
\end{align}
\end{proof}

\section{Caveats of Fair Clustering: Unintended Downstream Effects in \ML{} Settings}\label{subsec:ml_shortcomings}


\begin{figure}[h!]
  \begin{center}
    \includegraphics[width=0.8\linewidth]{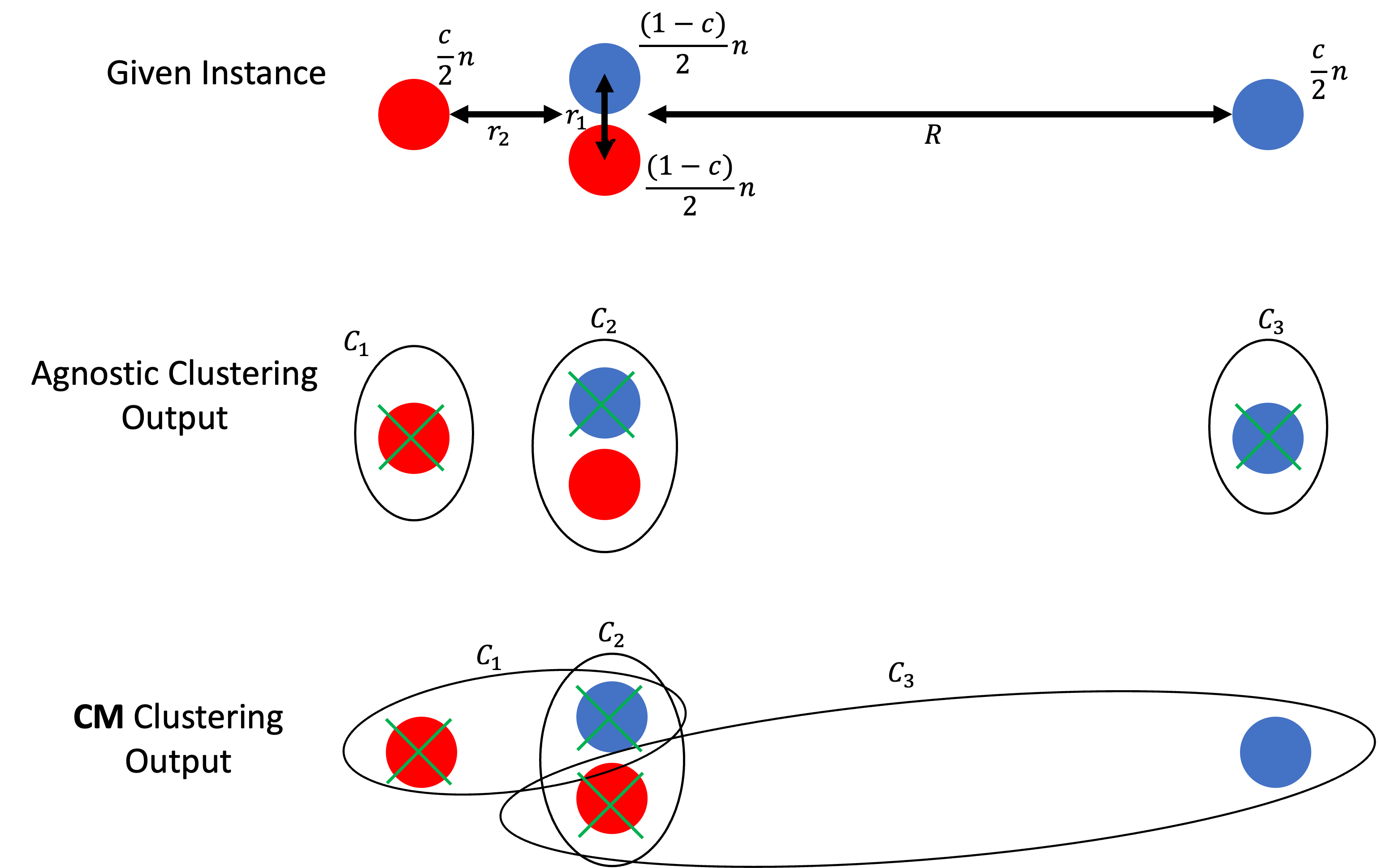}
  \end{center}
  \caption{In this example we have a set of points that coincide in the same location: both sets in the middle consist of $\frac{1-\anomeps}{2}n$ points each and the other two sets on the left and right consist of $\frac{\anomeps}{2}n$ points each where $c<\frac{1}{2}$. The middle blue and red sets are separated by a very small distance $r_1$ while their distance from the left set is approximately $r_2$ with $r_2 \ge r_1$. On the other hand, the blue set on the right is separated from the middle sets by at least $R$ and we have $R \gg r_1,r_2$. Note that in the \GF{} clustering the clusters are overlapping since they include points from coinciding sets.}
  \vspace{-0.1cm}
  \label{fig:anomaly_ex}
\end{figure}

We will focus in this section on the application of fair clustering in \ML{}. Our concern here is not primarily with the welfare of the groups but the validity of some methods used in \ML{} now that fair clustering is used instead of ordinary clustering. More specifically, a fair clustering may produce clustering outputs that differ significantly from a traditional clustering and therefore my lead to unintended downstream effects. As stated in Subsection \ref{subsec:review_cluster}, in machine learning and data analysis a clustering of a dataset is a partitioning of it into groups (clusters) where points in the same cluster are supposed to be similar to one another and points from different clusters are supposed to be dissimilar from one another. However, many of the fairness notions in clustering may group points that are faraway from each other to satisfy the fairness constraint as discussed and shown in examples in Section \ref{sec:full_utility}. This issue is not unique to fairness in clustering but can in general be seen in constrained clustering. For example, imposing an upper bound on the total number of points in a cluster may lead to similar behaviour since points in dense regions may need to be routed to centers further away in order not to violate the upper bound on the total number of points in a cluster \cite{barilan1993allocate,khuller2000capacitated,cohen2022fixed,aggarwal2010achieving,rosner2018privacy}. While imposing an upper bound is well-motivated in \OR{} settings as it would correspond to the service capacity of the facility, the same is not necessarily true in machine learning where one wants to reveal the structure of the dataset. In fact, one cannot think of many modifications to a clustering objective that are well-aligned with the \ML{} clustering desideratum --of grouping similar points and separating dissimilar ones-- that would assign points to further away centers. Therefore, given the fact that a fair clustering may group distant points in the same cluster it is worthwhile to wonder if classical post-processing methods that are applied to clustering --which are well-justified in an ordinary \ML{} setting-- remain well-justified when fairness has been imposed on the clustering. This is discussed in the following subsections. 




\subsection{False Positives and False Negatives in Outlier Detection}\label{ex:outlier}

Given a clustering, the data analyst may choose to use it to detect or remove outlier data points. A method that is well-known in clustering-based outlier detection is to flag points that are faraway from their centroid as anomalies \cite{chandola2009anomaly,smith2002clustering}. In ordinary (unconstrained) clustering, the point is assigned to its closest center. However, as mentioned earlier a fair clustering may assign points to further away centers to satisfy the fairness constraints. Therefore, if the data analyst chooses to apply such an outlier detection method over the output of a fair clustering using the distance of a point to its assigned center then she may flag points as anomalies when in fact they are not. One may think that this could be fixed by using the distance between the point and its closest center instead of its assigned center in the fair clustering. However, the center chosen by a fair clustering algorithm could be different from the center chosen by an ordinary algorithm. Furthermore, as mentioned in Subsection \ref{subsec:deg_focused} the points assigned to faraway centers in fair clustering might have a high representation from a specific group, leading a specific group to be disproportionately flagged as outliers. Such an outcome could be considered as causing harm. In Figure \ref{fig:anomaly_ex} we show an example that exhibits the above behavior --for simplicity we assume that centers have to be selected from the given set of points which is the case in many practical applications-- where points belonging to a specific group are abnormally faraway from their center and could be flagged as outliers. More Specifically in the figure, agnostic clustering would lead to all points being away from their center by a very small distance of at most $r_1$ while the \GF{} clustering would lead the set of blue points on the right to be at a large distance of $R$ from their assigned center which is much larger than the rest of points and therefore they are very likely to be flagged as outliers. Note that this happens in an optimal \GF{} clustering.


\begin{figure}
  \begin{center}
    \includegraphics[width=0.8\linewidth]{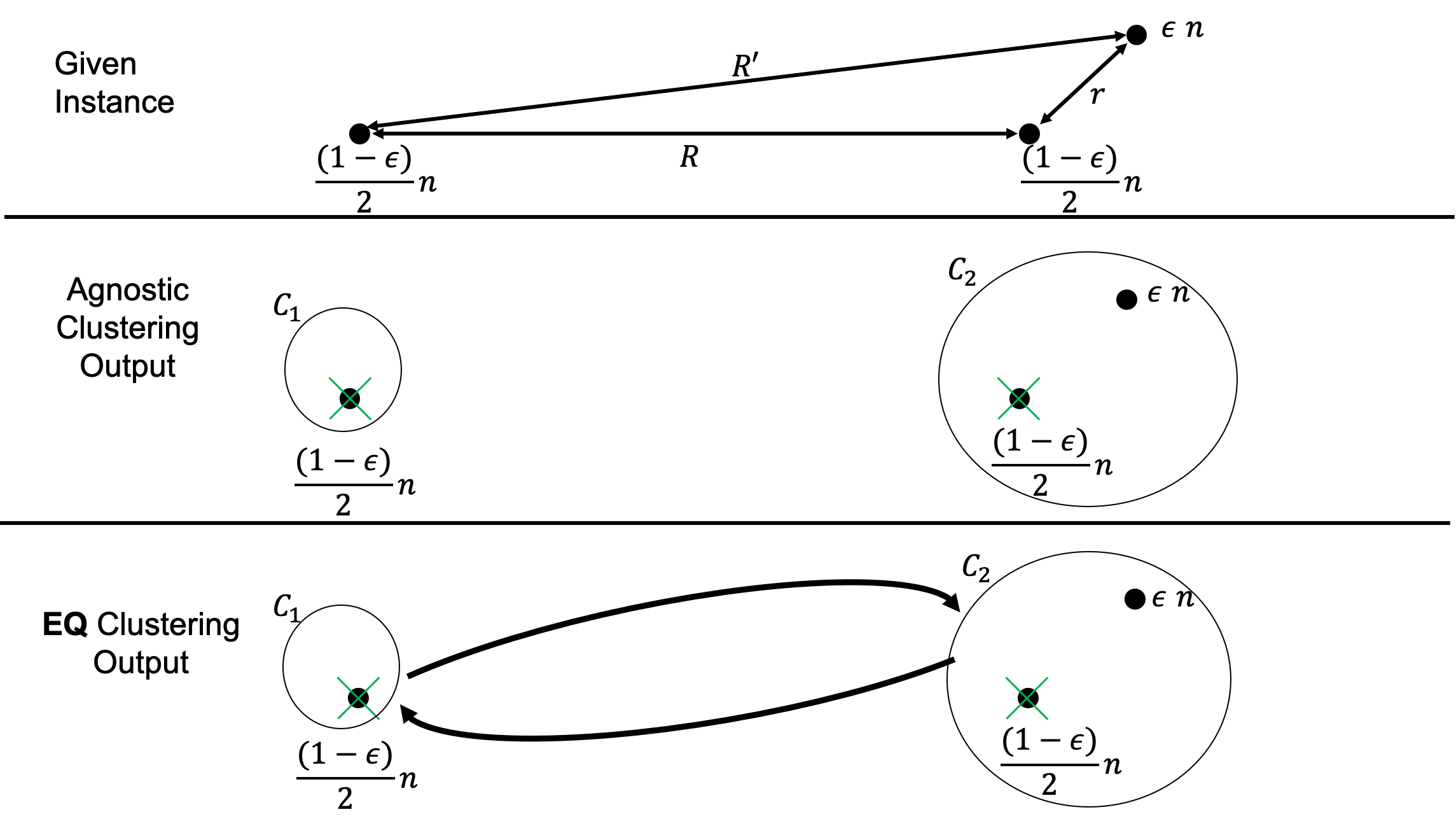}
  \end{center}
  \caption{We have three sets of coinciding points, the set on the top right consist of $\epsilon n$ many points while the remaining two sets consist of $\frac{1-\epsilon}{2}n$ each. The similarity set of any point includes the entire dataset. The distances are shown, note that we set $\frac{R'}{R} \leq \alpha$.}
  \vspace{-0.1cm}
  \label{fig:anomaly_equitable_ex}
\end{figure}

Furthermore, the equitable fair clustering notion \EQ{} of \citet{chakrabarti2022new} (see Subsection \ref{subsec:review_of_fc}) forces the maximum distance ratio between points in the same similarity set to be at most $\alpha$. While this might be desirable in some applications, the clustering output may not be useful for the outlier-detection application mentioned above since the difference in distances has been significantly reduced. In Figure \ref{fig:anomaly_equitable_ex} we show an example where applying agnostic clustering (we assume a $k$-median or $k$-means objective and that centers have to be selected from the given set of points) would result in the top right set of $\epsilon n$ many points that are clearly faraway from the rest of the cluster center to be possibly flagged as outliers. In a practical application that might in fact be the right choice since the rest of the cluster on the right $C_2$ are at a distance zero from the center. On the other hand, the \EQ{} clustering output would have the same set of centers but would assign all of the points on the right to the left and vice versa. Note that for each point the similarity set consists of all points in the dataset. Now the points in the right cluster $C_2$ have a much smaller distance to center ratio of at most $\frac{R'}{R}$ and based on distance to center the top right set of points may incorrectly be consider as ordinary (non-outlier) points especially if $\frac{R'}{R}$ is small.

The above examples show interesting effects that could result from a fair clustering for outlier detection. The first leads to \emph{false positive} outliers (\GF{} constraint, Figure \ref{fig:anomaly_ex}) while the second leads to \emph{false negatives} (\EQ{} constraint, Figure \ref{fig:anomaly_equitable_ex}). The above issue could possibly be fixed, by for example, using an agnostic clustering output when doing anomaly detection. But this highlights the fact that a fair clustering is not a clustering in the traditional \ML{} sense. The downstream effects of a fair clustering should be taken into account more carefully. It is also worthwhile to mention the line of work on clustering with outliers where a subset of the points (to be chosen by the clustering algorithm) are ignored when calculating the clustering cost \cite{feng2019improved,chawla2013k,harris2019lottery,krishnaswamy2018constant}. While \citet{almanza2022k} extends this line of work to take group fairness considerations into account by having a proportional guarantee on the number of points chosen as outliers from each group, it still does not resolve the above issue since the resulting clustering does not additionally combine a desired notion of fairness such as \GF{} or \EQ{}.



\subsection{Leveraging Cluster Homogeneity for Using Simpler \ML{} Methods}\label{ex:clustering_classofication} 
Since a clustering partitions the dataset into homogeneous groups, this homogeneity within the cluster can be used to apply simpler \ML{} methods. For example, in supervised learning having clustered the dataset the data analyst may choose to use a specific classifier for each cluster. Since the cluster consists of similar points which are close in the metric space, then this may allow the usage of simpler and more tractable models such as a linear classifier/regressor. In an interactive learning setting such as in multi-armed bandits, one may use contextual bandits where the cluster decides the context as noted in \citet{lattimore2020bandit}. However, if the clustering is the output of a fair instead of an ordinary (unfair) clustering then since points maybe assigned to centers that are further away, this puts into question the validity of such approaches. More specifically, the clustering could merge points which are far away from one another and are separated in the feature space. Since this is a possible outcome, the points may not have a similar correlation with the desired output label. In Figure \ref{fig:linearclassification}, below we show an example where one can do an agnostic clustering of the dataset and  use a linear classifier for each cluster to obtain a zero classification error. However, using a fair clustering output (such as a \GF{} fair clustering) a linear hypothesis class would not lead to a small error classifier since blue points would necessarily have to be merged with more red points and the separating lines are different and clearly from the figure they have a different linear classifier.

Therefore, this simple and common approach may not provide the expected value if applied over a fair clustering. Similar to the previous subsection the above highlights how a fair clustering may behave differently in an \ML{} pipeline and therefore require caution or special treatment by the data analyst. 



\begin{figure}[h]
    \centering
    \includegraphics[width=\linewidth]{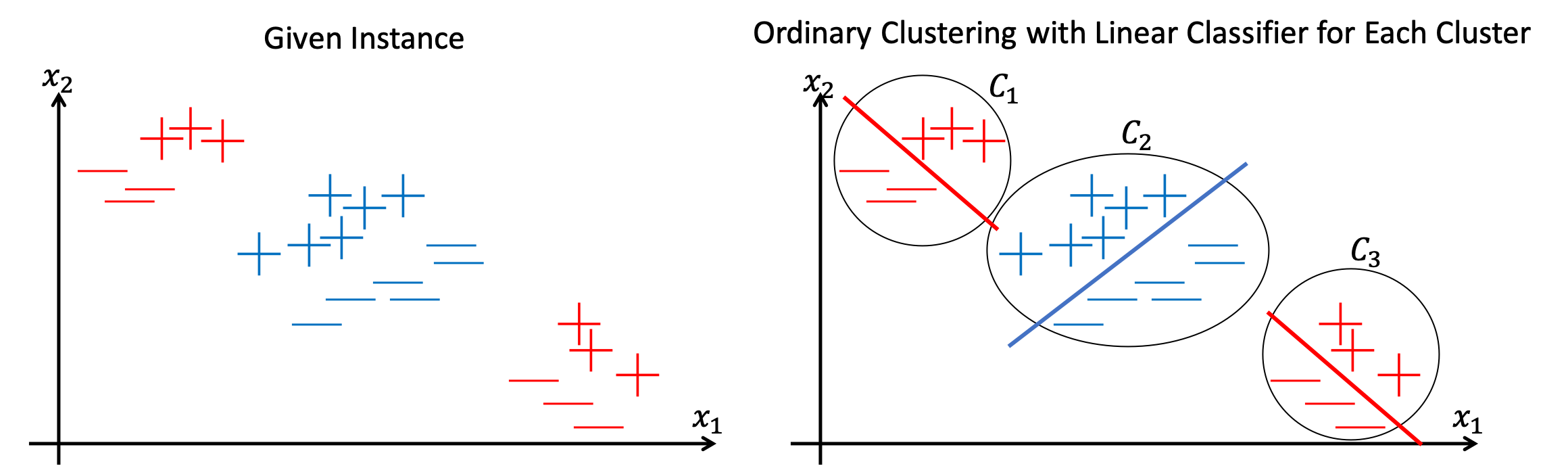}
    \caption{The example shows a collection of points in the feature space belonging to two classes $\{+,-\}$. By applying ordinary clustering (with $k=3$) followed by a separate linear classifier for each cluster we can obtain a zero classification error. However, since a \GF{} fair clustering would have to merge red and blue points in a $50\%-50\%$ proportion in each cluster a separate linear classifier for each cluster we would not obtain a low error since the majority of the red and blue points have a different separating line between the $+$ and $-$ classes.}
    \vspace{-0.1cm}
    \label{fig:linearclassification}
\end{figure}


\subsection{Ambiguity of the Affects of Clustering on the Final Outcomes}\label{subsec:ambig}
A common usage of clustering is exploratory data analysis \cite{jain1988algorithms,dubes1980clustering,jain1999data}. An analyst may cluster the data set, inspect the prototypical vectors (centroids) as well as the points of each cluster for better understanding, and then make further decisions based on this inspection. The decisions that the analyst may choose are wide and varied. Like in the above, the analyst may apply outlier detection or use a specific classifier for each cluster. The analyst may also find some clusters to be more complicated and warrant further processing such as further data collection within the cluster-associated feature space or she may conclude that this cluster should undergo some denoising process.   

Accordingly, it is common for clustering to be in the beginning of the \ML{} pipeline and to be followed by further (possibly elaborate) steps. This implies that the downstream effects of any fair clustering algorithm in \ML{} are not fully characterized unless the subsequent steps are detailed and clarified. Note that unlike differential privacy \cite{dwork2014algorithmic}, the work of \citet{dwork2018fairness} has shown that in general fairness composition does not hold. I.e., the sequential application of fair algorithms does not necessarily preserve fairness. Therefore, one should not expect that applying a fair algorithm over a fair clustering would necessarily preserve fairness.

\section{Datasets, Experimental Methods, and Impact Considerations}\label{sec:datasets_practical}
In any application (especially \ML{}) the chosen datasets can have critical consequences. One can easily reach wrong conclusions about the behavior of an algorithms or its impacts on individuals by using datasets that are not well-aligned with the application domains of the algorithms or by following unsuitable experimental methods. In this section we elaborate on these issues. 


\paragraph{Common Weaknesses in the Experimental Methods.} (1) \emph{Limited and Weak Datasets:} Most of the literature has used datasets from the UCI repository \cite{frank2010uci}.  Additionally Amazon co-purchase dataset is used in \citet{ahmadian2019clustering} and \citet{ahmadi2020fair_correlation}, and FriendshipNet and DrugNet datasets are used in \citet{kleindessner2019guarantees}. While some of these datasets were in fact intended for clustering tasks such as network discovery, it is not clear that these datasets are all suited for clustering and that clustering algorithms suitable to them are used. For example, as noted by \citet{von2012clustering} clustering in \ML{} should be context-dependent and not thought of as a pure mathematical optimization problem. Yet in fair clustering papers we do not in general find thorough discussions of datasets that goes beyond high-level information such as number of entries and selected features. (2) \emph{Unsuitable and Unjustified Experimental Choices:} Even if we were to assume that the used datasets were in fact suitable for clustering, as noted by \citet{ben2018clustering} a fundamental issue in clustering is that different clustering algorithms can lead to dramatically different outputs. 
In existing literature, we find that fair clustering papers would use different algorithms over the same dataset, e.g. the UCI Bank dataset is used in both \citet{bera2019fair} and \citet{knittel2023generalized} although the first uses the $k$-means algorithm whereas the second uses hierarchical clustering. Second, even if we were to assume that $k$-means, $k$-median, or $k$-center is the right clustering objective for the given dataset, many papers \cite{chierichetti2017fair, bera2019fair, esmaeili2020probabilistic,esmaeili2021fair} use a set of values for the number of centers $k$ to demonstrate the validity of the theoretical guarantees. However, empirically the dataset would have an ``instrinsic'' number of clusters $k$ which would correspond to the true number of clusters. This puts into question, the conclusions that one may draw about the fair and even the ordinary (unfair) clustering. Another issue is that datasets used contain numerical and categorical features and some features are omitted in the experimental procedure. Many papers are not explicit about the features used and the ones omitted and the justification behind. Neither is its effect on the clustering output considered. Besides, pre-processing methods and choice of metric are usually not explicitly mentioned and justified either. These issues all make reproducibility much more challenging.

\paragraph{Ignored Impact Considerations.} For an algorithmic fairness application in clustering, a thorough empirical evaluation would involve hand picking a dataset where some form of bias was applied or an unequal fair treatment was clearly recorded in the clustering output and then applying a fair clustering algorithm to show an improvement in welfare or a reduction in unfairness. This is not easy to do, especially using UCI datasets as some of them are around two decades old \cite{ding2021retiring}. The work of \citet{abbasi2023measuring} shows an interesting and detailed example where methods from fair clustering have been used to mitigate vote access disparities in real life.
However, the vast majority of the literature has not demonstrated such a thorough and clear application of fair clustering. The lack of demonstrated practical applications in the literature is certainly a weakness. 
Moreover, there have been applications of fair clustering to datasets that are arguably not suitable in terms of their impacts on individuals. For example, both \citet{chierichetti2017fair} and \citet{backurs2019scalable} use the UCI diabetes dataset \footnote{https://archive.ics.uci.edu/ml/datasets/diabetes+130-us+hospitals+for+years+1999-2008} to run fair clustering algorithms for the \GF{} notion which would guarantee proportional representation for each group in the cluster. However, given that the diabetes dataset is concerned with a medical application one can argue that the possible heterogeneity that would be present among individuals belonging to different groups such as race or gender are informative and therefore a fair clustering (especially one like \GF{}) is not suited here and may lead the decision maker to reach incorrect conclusions or miss some critical observations of heterogeneous impacts/behaviors that are known to exist among different groups in medical applications \cite{gale2001diabetes,legato2006gender,cheng2019prevalence,spanakis2013race}.

\section{Miscellaneous Issues of Algorithmic Fairness}\label{sec:misc}
In this section we discuss  additional issues in fair clustering which are in large part shared with the broader algorithmic fairness literature but we add context and considerations that are specific to fair clustering.   

\paragraph{The Many Constraints in Fair Clustering and How to Reconcile Them.} At the current moment the literature has produced at least seven different notions of fairness in clustering \cite{dickerson2023doubly,fc_tutorial}. Moreover, each notion that was introduced (while being well-justified in terms of fairness considerations) does not refer to or consider the interaction with the previously introduced fairness notions in clustering. In supervised learning, the work of \citet{kleinberg2016inherent} showed that two desired fairness notions (calibration and balance) cannot be satisfied simultaneously but the fair clustering literature has not considered the interaction of the different fairness notions with the exception of the recent work of \citet{dickerson2023doubly} and \citet{kellerhals2023proportional}. Specifically, \citet{dickerson2023doubly} show that \GF{} and another group fairness constraint\footnote{This other fairness constraint is diversity in center selection (\DS{}). In the \DS{} constraint, centers are selected from the given set of points which belong to different groups and each group must have a pre-specified number of centers to ensure group diversity in the selected centers.} that was considered in \citet{kleindessner2019fair} and \citet{hotegni2023approximation} can be satisfied simultaneously despite the fact that each of them is incompatible (having an empty feasible set) with a number of distance-based fair clustering notions\footnote{A distance-based fair clustering notion is one that uses the distance between the points in the definition of the fairness notion. Both \GF{} and \DS{} are not distance-based whereas the \EQ{} constraint from Subsection \ref{subsec:review_of_fc} is distance-based.} \cite{chen2019proportionally,makarychev2021approximation,ghadiri2021socially,abbasi2020fair,jung2019center}. In a similar direction, \citet{kellerhals2023proportional} show that any approximation algorithms for the individual fairness notion of \citet{jung2019center} approximates the proportional fairness notion of \citet{chen2019proportionally} and vice versa. This still leaves a number of open questions: Are there other fairness notions in clustering that are also compatible with one another? Are there more general notions which possibly encompass existing ones? More importantly, is this approach of introducing different constraints and satisfying them scalable? How does one build an algorithm which satisfies or makes a trade-off between numerous different notions? Even if one was to forgo algorithms with theoretical guarantees\footnote{Note that almost all of the papers in fair clustering introduce algorithms with theoretical guarantees on the clustering objective and the bound on the fairness violations.} and use heuristics instead the large number of notions to consider would make such heuristics highly non-trivial to design.


\paragraph{Explainable Algorithms} Explainability has become an important consideration in machine learning, especially in applications that have societal and user-welfare considerations \cite{rudin2019stop,arrieta2020explainable}. In clustering there has been recent work on explainable algorithms that can give users a simpler interpretation of the final clustering output \cite{frost2020exkmc,dasgupta2020explainable,makarychev2021near}. One can naturally see that it is desirable to have algorithms that are both fair and explainable since both are important considerations when the welfare of individuals are at stake, but we are not aware of any paper that combines both fairness and explainability in clustering.

\paragraph{Robustness to Strategic Manipulations.} It is not unexpected for individuals to misreport their information or adapt their behaviour according to the deployed algorithm to achieve the best outcome \cite{chen2018strategyproof,bechavod2022information,hardt2016strategic}. Yet we have not so far seen strategic considerations in the fair clustering literature although they are well-motivated. For example, in the \ML{} setting individuals can introduce ``strategic'' noise to their feature vector or misreport their address in an \OR{} setting to be assigned to better centers/facilities and therefore receive better outcomes. 

\paragraph{Satisfying Group Fairness Notions When Group Memberships Are Not Known.} The vast majority of group fairness algorithms in various settings assume knowledge of the group memberships. Yet in many practical applications group memberships are imperfectly known or even completely unknown. While the fair classification literature has paid significant attention to this problem \cite{taskesen2020distributionally,wang2020robust,prost2021measuring,awasthi2020equalized,hashimoto2018fairness} --with the exception of \citet{esmaeili2020probabilistic} which considers the \GF{} constraint-- the problem has remained largely ignored in fair clustering. One should also note that while the work of \citet{esmaeili2020probabilistic} has considered this problem, it makes the strong assumption of having complete probabilistic knowledge of the group memberships and the weak guarantee of satisfying the fairness constraints in expectation\footnote{In a given cluster $i$ and color $h$, the \GF{} constraint is satisfied in \citet{esmaeili2020probabilistic} according to the expected number of points of color $h$ in cluster $i$. Since it is assumed that we have probabilistic color assignments for each point this expectation can be calculated.} not deterministically. Therefore, effective algorithms for this salient problem are needed.

\section{A Path Towards Impactful Fair Clustering Research}\label{sec:towards_more_impactful}

Based on the shortcomings and issues pointed out in the previous sections, in this section we highlight a collection of directions that could lead to more impactful fair clustering research, along with the potential challenges they include. A thoughtful reader might find these ideas intertwined at times. 


\paragraph{Concrete Applications and Representative Real World Data.} The issue of lacking concrete applications is mentioned in Section \ref{subsec:ml_shortcomings} (briefly highlighted in Subsection \ref{subsec:ambig}) and Section \ref{sec:datasets_practical}. We believe that more work along the lines of \citet{abbasi2023measuring} which gives a concrete application of fair clustering would bring great benefits. With concrete applications in mind, algorithm designers could model utility and welfare thus allowing fair clustering to overcome the salient shortcomings mentioned in Section \ref{sec:full_utility}. Further, since such applications are likely to reveal deficiencies in the existing fairness notions this would represent an opportunity to improve the existing notions and introduce more effective ones. 

As mentioned in Section \ref{sec:datasets_practical} the literature has not yet produced a dataset that is truly representative of the fair clustering problem. 
Given the above point of having concrete applications, multiple datasets would be needed to capture different variants of the fair clustering problem. Such datasets intended for fair clustering tasks should also come with their datasheets, i.e. including details such as motivation, composition, collection processes and recommended uses as suggested by \citet{gebru2021datasheets}. Such descriptions would give fair clustering algorithm designers clarity as to what datasets to test their algorithm on. It is critical that such datasets come from real life settings and reflect realistic distributional information. 
Furthermore, theoretical results such as incompatibility between fair clustering notions and the unboundedness of the price of fairness as shown in \citet{dickerson2023doubly} and \citet{esmaeili2021fair} could be too pessimistic as they are based on worst case analysis. Having representative datasets with realistic distributions would enable us to better gauge the level of incompatibility and the ``true'' PoF of fairness notions in real life instances.  

Building real world datasets in fair clustering settings could be challenging, especially in potentially high stakes applications. Similar challenges were mentioned by \citet{patro2022fair} in fair ranking, such as the legal obligations of following privacy and data minimization policies. However, ranking is often used in recommender systems that tend to be data-rich and more able to obtain sensitive information. On the other hand, many of the applications of fair clustering (especially in \OR{} settings) the collection of sensitive information might be restricted. Further, in \OR{} settings public entities such as schools or hospitals could collect and aggregate these datasets, but these entities would still need to go through privacy processing methods. A difficulty that is unique in such cases could be that the datasets are of a much smaller size leading methods such as differential privacy to be less effective\footnote{As a rule, differential privacy degrades the output of an algorithm but this degradation diminishes as the dataset size increases \cite{dwork2014algorithmic}.}.


\paragraph{Taking the Utility and Welfare Effects Into Account More Rigorously and Clearly.} 
As noted in Section \ref{sec:full_utility}, unlike in supervised-learning settings where significant progress was made in characterizing welfare \cite{heidari2019long,hu2020fair,mladenov2020optimizing,chen2021fairness}, the fair clustering literature has been lacking in terms of full welfare characterizations. Introducing such a welfare-centric optimization approach would be impactful and possibly offer a simpler alternative to the existing approach that has resulted in over seven different constraints. Even if a welfare-centric optimization approach is not fully realized, having an approximate picture of the utility would help avoid causing possible harms that come from using a fair clustering algorithm that is mostly focused on a restricted consideration. 

\paragraph{Long-term Fair Clustering} When deploying a fair algorithm it is important to consider its long-term effects and avoid a myopic perspective that only considers its immediate outcomes. Algorithms interact with the environment and as a result change the environment they were intended to operate on \cite{perdomo2020performative}. In interactive learning paradigms some existing models already study temporal aspects and optimize for long-term fairness \cite{d2020fairness,yin2023long,liu2018delayed,liu2020disparate,hu2020fair}. Unfortunately the formulation and evaluation of fairness in clustering problems in its current state is more static. Accurate modeling and evaluations of long-term effects of fair algorithms should rely on works from social and behavioral research for realistic feedback and must be application specific. Within fair clustering, applications in \OR{} settings (hospital and school selection) clearly have long-term effects and fairness concerns, whereas in \ML{} settings the picture is more blurred and long-term effects again depend on downstream tasks performed. \citet{patro2022fair} provides an extensive list of simulation frameworks that already exist in interactive learning settings and that can be adapted to examine long-term fairness effects. In its current state, there is no existing framework that has such a capability in clustering. A framework that implements various fair clustering algorithms along with their dynamic interaction with the environment would be a first step in this direction.

\paragraph{A Framework for Using Fair Clustering and Standards.} Given that we have many different notions in clustering, a perplexing question is what general framework and fairness notion is best to use? Further, when should we choose group fairness over individual fairness? When are distance-based fairness notions preferred? These questions demand standards in applying fair clustering algorithms and answering them could have significant impact. 



\paragraph{Engaging Stakeholders.} Like other algorithmic fairness settings it is important to engage the stakeholders, i.e. the individuals and communities that will be affected by fair clustering algorithms. For example the work of \citet{saha2020measuring} carries a study to investigate the opinions of average individuals about some fairness notions in machine learning and whether they even understand them. Similarly, \citet{yaghini2021human} conducts surveys to understand people's fairness assessments which are then utilized to design a fairness notion. In fair clustering, while some notions like \GF{} are simple and motivated by the established doctrine of disparate impact, notions like \EQ{} are more elaborate and could lead to odd behaviour as shown in Figure \ref{fig:welfare_ex_equitable}. Accordingly, it is not clear whether individuals who could be affected by such notions would even understand or agree with them. Inputs and feedback from stakeholders can help us improve on existing notions and introduce more interpretable and realistic ones.

\bibliography{refs}

\begin{thebibliography}{134}
\providecommand{\natexlab}[1]{#1}
\providecommand{\url}[1]{\texttt{#1}}
\expandafter\ifx\csname urlstyle\endcsname\relax
  \providecommand{\doi}[1]{doi: #1}\else
  \providecommand{\doi}{doi: \begingroup \urlstyle{rm}\Url}\fi

\bibitem[Abbasi et~al.(2020)Abbasi, Bhaskara, and
  Venkatasubramanian]{abbasi2020fair}
Mohsen Abbasi, Aditya Bhaskara, and Suresh Venkatasubramanian.
\newblock Fair clustering via equitable group representations, 2020.

\bibitem[Abbasi et~al.(2023)Abbasi, Barrett, Lum, Friedler, and
  Venkatasubramanian]{abbasi2023measuring}
Mohsen Abbasi, Calvin Barrett, Kristian Lum, Sorelle~A Friedler, and Suresh
  Venkatasubramanian.
\newblock Measuring and mitigating voting access disparities: a study of race
  and polling locations in florida and north carolina.
\newblock In \emph{Proceedings of the 2023 ACM Conference on Fairness,
  Accountability, and Transparency}, pages 1038--1048, 2023.

\bibitem[Aggarwal et~al.(2004)Aggarwal, Wolf, and Yu]{aggarwal2004method}
Charu~Chandra Aggarwal, Joel~Leonard Wolf, and Philip Shi-lung Yu.
\newblock Method for targeted advertising on the web based on accumulated
  self-learning data, clustering users and semantic node graph techniques,
  March~30 2004.
\newblock US Patent 6,714,975.

\bibitem[Aggarwal et~al.(2010)Aggarwal, Panigrahy, Feder, Thomas, Kenthapadi,
  Khuller, and Zhu]{aggarwal2010achieving}
Gagan Aggarwal, Rina Panigrahy, Tom{\'a}s Feder, Dilys Thomas, Krishnaram
  Kenthapadi, Samir Khuller, and An~Zhu.
\newblock Achieving anonymity via clustering.
\newblock \emph{ACM Transactions on Algorithms (TALG)}, 6\penalty0
  (3):\penalty0 1--19, 2010.

\bibitem[Ahmadi et~al.(2020)Ahmadi, Galhotra, Saha, and
  Schwartz]{ahmadi2020fair_correlation}
Saba Ahmadi, Sainyam Galhotra, Barna Saha, and Roy Schwartz.
\newblock Fair correlation clustering.
\newblock 2020.

\bibitem[Ahmadi-Javid et~al.(2017)Ahmadi-Javid, Seyedi, and
  Syam]{ahmadi2017survey}
Amir Ahmadi-Javid, Pardis Seyedi, and Siddhartha~S Syam.
\newblock A survey of healthcare facility location.
\newblock \emph{Computers \& Operations Research}, 79:\penalty0 223--263, 2017.

\bibitem[Ahmadian et~al.(2019)Ahmadian, Epasto, Kumar, and
  Mahdian]{ahmadian2019clustering}
Sara Ahmadian, Alessandro Epasto, Ravi Kumar, and Mohammad Mahdian.
\newblock Clustering without over-representation.
\newblock In \emph{International Conference on Knowledge Discovery and Data
  Mining}, 2019.

\bibitem[Almanza et~al.(2022)Almanza, Epasto, Panconesi, and Re]{almanza2022k}
Matteo Almanza, Alessandro Epasto, Alessandro Panconesi, and Giuseppe Re.
\newblock k-clustering with fair outliers.
\newblock In \emph{Proceedings of the Fifteenth ACM International Conference on
  Web Search and Data Mining}, pages 5--15, 2022.

\bibitem[Anderson et~al.(2020)Anderson, Bera, Das, and
  Liu]{anderson2020distributional}
Nihesh Anderson, Suman~K Bera, Syamantak Das, and Yang Liu.
\newblock Distributional individual fairness in clustering.
\newblock 2020.

\bibitem[Arabani and Farahani(2012)]{arabani2012facility}
Alireza~Boloori Arabani and Reza~Zanjirani Farahani.
\newblock Facility location dynamics: An overview of classifications and
  applications.
\newblock \emph{Computers \& Industrial Engineering}, 62\penalty0 (1):\penalty0
  408--420, 2012.

\bibitem[Arrieta et~al.(2020)Arrieta, D{\'\i}az-Rodr{\'\i}guez, Del~Ser,
  Bennetot, Tabik, Barbado, Garc{\'\i}a, Gil-L{\'o}pez, Molina, Benjamins,
  et~al.]{arrieta2020explainable}
Alejandro~Barredo Arrieta, Natalia D{\'\i}az-Rodr{\'\i}guez, Javier Del~Ser,
  Adrien Bennetot, Siham Tabik, Alberto Barbado, Salvador Garc{\'\i}a, Sergio
  Gil-L{\'o}pez, Daniel Molina, Richard Benjamins, et~al.
\newblock Explainable artificial intelligence (xai): Concepts, taxonomies,
  opportunities and challenges toward responsible ai.
\newblock \emph{Information fusion}, 58:\penalty0 82--115, 2020.

\bibitem[Arun et~al.(2016)Arun, Ishan, and Sanmeet]{arun2016loan}
Kumar Arun, Garg Ishan, and Kaur Sanmeet.
\newblock Loan approval prediction based on machine learning approach.
\newblock \emph{IOSR J. Comput. Eng}, 18\penalty0 (3):\penalty0 18--21, 2016.

\bibitem[Awasthi and Sandholm(2009)]{awasthi2009online}
Pranjal Awasthi and Tuomas Sandholm.
\newblock Online stochastic optimization in the large: Application to kidney
  exchange.
\newblock In \emph{IJCAI}, volume~9, pages 405--411, 2009.

\bibitem[Awasthi et~al.(2020)Awasthi, Kleindessner, and
  Morgenstern]{awasthi2020equalized}
Pranjal Awasthi, Matth{\"a}us Kleindessner, and Jamie Morgenstern.
\newblock Equalized odds postprocessing under imperfect group information.
\newblock In \emph{International conference on artificial intelligence and
  statistics}, pages 1770--1780. PMLR, 2020.

\bibitem[Awasthi et~al.(2022)Awasthi, Brubach, Chakrabarty, Dickerson,
  Esmaeili, Kleindessner, Knittel, Morgenstern, Samadi, Srinivasan, and
  Tsepenekas]{fc_tutorial}
Pranjal Awasthi, Brian Brubach, Deeparnab Chakrabarty, John~P Dickerson,
  Seyed~A. Esmaeili, Matthäus Kleindessner, Marina Knittel, Jamie Morgenstern,
  Samira Samadi, Aravind Srinivasan, and Leonidas Tsepenekas.
\newblock Fairness in clustering.
\newblock In \emph{AAAI Conference on Artificial Intelligence}, 2022.

\bibitem[Aziz et~al.(2021)Aziz, Cseh, Dickerson, and
  McElfresh]{aziz2021optimal}
Haris Aziz, Agnes Cseh, John~P Dickerson, and Duncan~C McElfresh.
\newblock Optimal kidney exchange with immunosuppressants.
\newblock In \emph{Proceedings of the AAAI Conference on Artificial
  Intelligence}, volume~35, pages 21--29, 2021.

\bibitem[Backurs et~al.(2019)Backurs, Indyk, Onak, Schieber, Vakilian, and
  Wagner]{backurs2019scalable}
Arturs Backurs, Piotr Indyk, Krzysztof Onak, Baruch Schieber, Ali Vakilian, and
  Tal Wagner.
\newblock Scalable fair clustering.
\newblock 2019.

\bibitem[Balinski(1965)]{balinski1965integer}
Michel~Louis Balinski.
\newblock Integer programming: methods, uses, computations.
\newblock \emph{Management science}, 12\penalty0 (3):\penalty0 253--313, 1965.

\bibitem[Bansal et~al.(2004)Bansal, Blum, and Chawla]{bansal2004correlation}
Nikhil Bansal, Avrim Blum, and Shuchi Chawla.
\newblock Correlation clustering.
\newblock \emph{Machine learning}, 56:\penalty0 89--113, 2004.

\bibitem[Barilan et~al.(1993)Barilan, Kortsarz, and Peleg]{barilan1993allocate}
Judit Barilan, Guy Kortsarz, and David Peleg.
\newblock How to allocate network centers.
\newblock \emph{Journal of Algorithms}, 15\penalty0 (3):\penalty0 385--415,
  1993.

\bibitem[Bechavod et~al.(2022)Bechavod, Podimata, Wu, and
  Ziani]{bechavod2022information}
Yahav Bechavod, Chara Podimata, Steven Wu, and Juba Ziani.
\newblock Information discrepancy in strategic learning.
\newblock In \emph{International Conference on Machine Learning}, pages
  1691--1715. PMLR, 2022.

\bibitem[Ben-David(2018)]{ben2018clustering}
Shai Ben-David.
\newblock Clustering-what both theoreticians and practitioners are doing wrong.
\newblock In \emph{Proceedings of the AAAI Conference on Artificial
  Intelligence}, volume~32, 2018.

\bibitem[Bera et~al.(2019)Bera, Chakrabarty, Flores, and
  Negahbani]{bera2019fair}
Suman Bera, Deeparnab Chakrabarty, Nicolas Flores, and Maryam Negahbani.
\newblock Fair algorithms for clustering.
\newblock In \emph{Neural Information Processing Systems}, 2019.

\bibitem[Bercea et~al.(2019)Bercea, Gro{\ss}, Khuller, Kumar, R{\"o}sner,
  Schmidt, and Schmidt]{bercea2018cost}
Ioana~O Bercea, Martin Gro{\ss}, Samir Khuller, Aounon Kumar, Clemens
  R{\"o}sner, Daniel~R Schmidt, and Melanie Schmidt.
\newblock On the cost of essentially fair clusterings.
\newblock 2019.

\bibitem[Berk et~al.(2021)Berk, Heidari, Jabbari, Kearns, and
  Roth]{berk2021fairness}
Richard Berk, Hoda Heidari, Shahin Jabbari, Michael Kearns, and Aaron Roth.
\newblock Fairness in criminal justice risk assessments: The state of the art.
\newblock \emph{Sociological Methods \& Research}, 50\penalty0 (1):\penalty0
  3--44, 2021.

\bibitem[Berk and Bleich(2013)]{berk2013statistical}
Richard~A Berk and Justin Bleich.
\newblock Statistical procedures for forecasting criminal behavior: A
  comparative assessment.
\newblock \emph{Criminology \& Pub. Pol'y}, 12:\penalty0 513, 2013.

\bibitem[Boonmee et~al.(2017)Boonmee, Arimura, and Asada]{boonmee2017facility}
Chawis Boonmee, Mikiharu Arimura, and Takumi Asada.
\newblock Facility location optimization model for emergency humanitarian
  logistics.
\newblock \emph{International journal of disaster risk reduction}, 24:\penalty0
  485--498, 2017.

\bibitem[Brandt et~al.(2016)Brandt, Conitzer, Endriss, Lang, and
  Procaccia]{brandt2016handbook}
Felix Brandt, Vincent Conitzer, Ulle Endriss, J{\'e}r{\^o}me Lang, and Ariel~D
  Procaccia.
\newblock \emph{Handbook of computational social choice}.
\newblock Cambridge University Press, 2016.

\bibitem[Brubach et~al.(2020)Brubach, Chakrabarti, Dickerson, Khuller,
  Srinivasan, and Tsepenekas]{brubach2020pairwise}
Brian Brubach, Darshan Chakrabarti, John~P Dickerson, Samir Khuller, Aravind
  Srinivasan, and Leonidas Tsepenekas.
\newblock A pairwise fair and community-preserving approach to k-center
  clustering.
\newblock 2020.

\bibitem[Brubach et~al.(2021)Brubach, Chakrabarti, Dickerson, Srinivasan, and
  Tsepenekas]{brubach2021fairness}
Brian Brubach, Darshan Chakrabarti, John~P Dickerson, Aravind Srinivasan, and
  Leonidas Tsepenekas.
\newblock Fairness, semi-supervised learning, and more: A general framework for
  clustering with stochastic pairwise constraints.
\newblock In \emph{Proceedings of the AAAI Conference on Artificial
  Intelligence}, volume~35, pages 6822--6830, 2021.

\bibitem[Cabot et~al.(1970)Cabot, Francis, and Stary]{cabot1970network}
A~Victor Cabot, Richard~L Francis, and Michael~A Stary.
\newblock A network flow solution to a rectilinear distance facility location
  problem.
\newblock \emph{AIIE Transactions}, 2\penalty0 (2):\penalty0 132--141, 1970.

\bibitem[Chakrabarti et~al.(2022)Chakrabarti, Dickerson, Esmaeili, Srinivasan,
  and Tsepenekas]{chakrabarti2022new}
Darshan Chakrabarti, John~P Dickerson, Seyed~A Esmaeili, Aravind Srinivasan,
  and Leonidas Tsepenekas.
\newblock A new notion of individually fair clustering: $\alpha$-equitable
  $k$-center.
\newblock In \emph{International Conference on Artificial Intelligence and
  Statistics}, pages 6387--6408. PMLR, 2022.

\bibitem[Chandola et~al.(2009)Chandola, Banerjee, and
  Kumar]{chandola2009anomaly}
Varun Chandola, Arindam Banerjee, and Vipin Kumar.
\newblock Anomaly detection: A survey.
\newblock \emph{ACM computing surveys (CSUR)}, 41\penalty0 (3):\penalty0 1--58,
  2009.

\bibitem[Chawla and Gionis(2013)]{chawla2013k}
Sanjay Chawla and Aristides Gionis.
\newblock k-means--: A unified approach to clustering and outlier detection.
\newblock In \emph{Proceedings of the 2013 SIAM international conference on
  data mining}, pages 189--197. SIAM, 2013.

\bibitem[Chen et~al.(2012)Chen, Sain, and Guo]{chen2012data}
Daqing Chen, Sai~Laing Sain, and Kun Guo.
\newblock Data mining for the online retail industry: A case study of rfm
  model-based customer segmentation using data mining.
\newblock 2012.

\bibitem[Chen and Hooker(2021)]{chen2021fairness}
Violet~Xinying Chen and JN~Hooker.
\newblock Fairness through social welfare optimization.
\newblock \emph{arXiv preprint arXiv:2102.00311}, 2021.

\bibitem[Chen et~al.(2019)Chen, Fain, Lyu, and
  Munagala]{chen2019proportionally}
Xingyu Chen, Brandon Fain, Charles Lyu, and Kamesh Munagala.
\newblock Proportionally fair clustering.
\newblock 2019.

\bibitem[Chen et~al.(2018)Chen, Podimata, Procaccia, and
  Shah]{chen2018strategyproof}
Yiling Chen, Chara Podimata, Ariel~D Procaccia, and Nisarg Shah.
\newblock Strategyproof linear regression in high dimensions.
\newblock In \emph{Proceedings of the 2018 ACM Conference on Economics and
  Computation}, pages 9--26, 2018.

\bibitem[Cheng et~al.(2019)Cheng, Kanaya, Araneta, Saydah, Kahn, Gregg,
  Fujimoto, and Imperatore]{cheng2019prevalence}
Yiling~J Cheng, Alka~M Kanaya, Maria Rosario~G Araneta, Sharon~H Saydah,
  Henry~S Kahn, Edward~W Gregg, Wilfred~Y Fujimoto, and Giuseppina Imperatore.
\newblock Prevalence of diabetes by race and ethnicity in the united states,
  2011-2016.
\newblock \emph{Jama}, 322\penalty0 (24):\penalty0 2389--2398, 2019.

\bibitem[Chhabra et~al.(2021)Chhabra, Masalkovait{\.e}, and
  Mohapatra]{chhabra2021overview}
Anshuman Chhabra, Karina Masalkovait{\.e}, and Prasant Mohapatra.
\newblock An overview of fairness in clustering.
\newblock \emph{IEEE Access}, 9:\penalty0 130698--130720, 2021.

\bibitem[Chierichetti et~al.(2017)Chierichetti, Kumar, Lattanzi, and
  Vassilvitskii]{chierichetti2017fair}
Flavio Chierichetti, Ravi Kumar, Silvio Lattanzi, and Sergei Vassilvitskii.
\newblock Fair clustering through fairlets.
\newblock In \emph{Neural Information Processing Systems}, 2017.

\bibitem[Chohlas-Wood et~al.(2021)Chohlas-Wood, Coots, Zhu, Brunskill, and
  Goel]{chohlas2021learning}
Alex Chohlas-Wood, Madison Coots, Henry Zhu, Emma Brunskill, and Sharad Goel.
\newblock Learning to be fair: A consequentialist approach to equitable
  decision-making.
\newblock \emph{arXiv preprint arXiv:2109.08792}, 2021.

\bibitem[Cohen-Addad and Li(2022)]{cohen2022fixed}
Vincent Cohen-Addad and Jason Li.
\newblock On the fixed-parameter tractability of capacitated clustering.
\newblock \emph{arXiv preprint arXiv:2208.14129}, 2022.

\bibitem[Corbett-Davies et~al.(2023)Corbett-Davies, Gaebler, Nilforoshan,
  Shroff, and Goel]{corbett2023measure}
Sam Corbett-Davies, Johann Gaebler, Hamed Nilforoshan, Ravi Shroff, and Sharad
  Goel.
\newblock The measure and mismeasure of fairness.
\newblock \emph{arXiv preprint arXiv:1808.00023}, 2023.

\bibitem[Cornu{\'e}jols et~al.(1983)Cornu{\'e}jols, Nemhauser, and
  Wolsey]{cornuejols1983uncapicitated}
G{\'e}rard Cornu{\'e}jols, George Nemhauser, and Laurence Wolsey.
\newblock The uncapicitated facility location problem.
\newblock Technical report, Cornell University Operations Research and
  Industrial Engineering, 1983.

\bibitem[Cui et~al.(2010)Cui, Ouyang, and Shen]{cui2010reliable}
Tingting Cui, Yanfeng Ouyang, and Zuo-Jun~Max Shen.
\newblock Reliable facility location design under the risk of disruptions.
\newblock \emph{Operations research}, 58\penalty0 (4-part-1):\penalty0
  998--1011, 2010.

\bibitem[D'Amour et~al.(2020)D'Amour, Srinivasan, Atwood, Baljekar, Sculley,
  and Halpern]{d2020fairness}
Alexander D'Amour, Hansa Srinivasan, James Atwood, Pallavi Baljekar, David
  Sculley, and Yoni Halpern.
\newblock Fairness is not static: deeper understanding of long term fairness
  via simulation studies.
\newblock In \emph{Proceedings of the 2020 Conference on Fairness,
  Accountability, and Transparency}, pages 525--534, 2020.

\bibitem[Dasgupta(2016)]{dasgupta2016cost}
Sanjoy Dasgupta.
\newblock A cost function for similarity-based hierarchical clustering.
\newblock In \emph{Proceedings of the forty-eighth annual ACM symposium on
  Theory of Computing}, pages 118--127, 2016.

\bibitem[Dasgupta et~al.(2020)Dasgupta, Frost, Moshkovitz, and
  Rashtchian]{dasgupta2020explainable}
Sanjoy Dasgupta, Nave Frost, Michal Moshkovitz, and Cyrus Rashtchian.
\newblock Explainable k-means and k-medians clustering.
\newblock In \emph{Proceedings of the 37th International Conference on Machine
  Learning, Vienna, Austria}, pages 12--18, 2020.

\bibitem[Davis and Ray(1969)]{davis1969branch}
PS~Davis and TL~Ray.
\newblock A branch-bound algorithm for the capacitated facilities location
  problem.
\newblock \emph{Naval Research Logistics Quarterly}, 16\penalty0 (3):\penalty0
  331--344, 1969.

\bibitem[Demaine et~al.(2006)Demaine, Emanuel, Fiat, and
  Immorlica]{demaine2006correlation}
Erik~D Demaine, Dotan Emanuel, Amos Fiat, and Nicole Immorlica.
\newblock Correlation clustering in general weighted graphs.
\newblock \emph{Theoretical Computer Science}, 361\penalty0 (2-3):\penalty0
  172--187, 2006.

\bibitem[Dickerson et~al.(2023)Dickerson, Esmaeili, Morgenstern, and
  Zhang]{dickerson2023doubly}
John Dickerson, Seyed~A Esmaeili, Jamie Morgenstern, and Claire~Jie Zhang.
\newblock Doubly constrained fair clustering.
\newblock 2023.

\bibitem[Ding et~al.(2021)Ding, Hardt, Miller, and Schmidt]{ding2021retiring}
Frances Ding, Moritz Hardt, John Miller, and Ludwig Schmidt.
\newblock Retiring adult: New datasets for fair machine learning.
\newblock \emph{Advances in neural information processing systems},
  34:\penalty0 6478--6490, 2021.

\bibitem[Dubes and Jain(1980)]{dubes1980clustering}
Richard Dubes and Anil~K. Jain.
\newblock Clustering methodologies in exploratory data analysis.
\newblock \emph{Advances in computers}, 19:\penalty0 113--228, 1980.

\bibitem[Dwork and Ilvento(2018)]{dwork2018fairness}
Cynthia Dwork and Christina Ilvento.
\newblock Fairness under composition.
\newblock \emph{arXiv preprint arXiv:1806.06122}, 2018.

\bibitem[Dwork et~al.(2014)Dwork, Roth, et~al.]{dwork2014algorithmic}
Cynthia Dwork, Aaron Roth, et~al.
\newblock The algorithmic foundations of differential privacy.
\newblock \emph{Foundations and Trends{\textregistered} in Theoretical Computer
  Science}, 9\penalty0 (3--4):\penalty0 211--407, 2014.

\bibitem[Esmaeili et~al.(2021)Esmaeili, Brubach, Srinivasan, and
  Dickerson]{esmaeili2021fair}
Seyed Esmaeili, Brian Brubach, Aravind Srinivasan, and John Dickerson.
\newblock Fair clustering under a bounded cost.
\newblock \emph{Advances in Neural Information Processing Systems},
  34:\penalty0 14345--14357, 2021.

\bibitem[Esmaeili et~al.(2020)Esmaeili, Brubach, Tsepenekas, and
  Dickerson]{esmaeili2020probabilistic}
Seyed~A Esmaeili, Brian Brubach, Leonidas Tsepenekas, and John~P Dickerson.
\newblock Probabilistic fair clustering.
\newblock 2020.

\bibitem[Esmaeili et~al.(2022)Esmaeili, Duppala, Dickerson, and
  Brubach]{esmaeili2022fair}
Seyed~A Esmaeili, Sharmila Duppala, John~P Dickerson, and Brian Brubach.
\newblock Fair labeled clustering.
\newblock \emph{arXiv preprint arXiv:2205.14358}, 2022.

\bibitem[Farahani et~al.(2010)Farahani, SteadieSeifi, and
  Asgari]{farahani2010multiple}
Reza~Zanjirani Farahani, Maryam SteadieSeifi, and Nasrin Asgari.
\newblock Multiple criteria facility location problems: A survey.
\newblock \emph{Applied mathematical modelling}, 34\penalty0 (7):\penalty0
  1689--1709, 2010.

\bibitem[Feldman and Serrano(2006)]{feldman2006welfare}
Allan~M Feldman and Roberto Serrano.
\newblock \emph{Welfare economics and social choice theory}.
\newblock Springer Science \& Business Media, 2006.

\bibitem[Feldman et~al.(2015)Feldman, Friedler, Moeller, Scheidegger, and
  Venkatasubramanian]{feldman2015certifying}
Michael Feldman, Sorelle~A Friedler, John Moeller, Carlos Scheidegger, and
  Suresh Venkatasubramanian.
\newblock Certifying and removing disparate impact.
\newblock In \emph{proceedings of the 21th ACM SIGKDD international conference
  on knowledge discovery and data mining}, pages 259--268, 2015.

\bibitem[Feng et~al.(2019)Feng, Zhang, Huang, Xu, and Wang]{feng2019improved}
Qilong Feng, Zhen Zhang, Ziyun Huang, Jinhui Xu, and Jianxin Wang.
\newblock Improved algorithms for clustering with outliers.
\newblock In \emph{Proc. 30th International symposium on algorithms and
  computation (ISAAC 2019)}, 2019.

\bibitem[Finocchiaro et~al.(2021)Finocchiaro, Maio, Monachou, Patro, Raghavan,
  Stoica, and Tsirtsis]{finocchiaro2021bridging}
Jessie Finocchiaro, Roland Maio, Faidra Monachou, Gourab~K Patro, Manish
  Raghavan, Ana-Andreea Stoica, and Stratis Tsirtsis.
\newblock Bridging machine learning and mechanism design towards algorithmic
  fairness.
\newblock In \emph{Proceedings of the 2021 ACM Conference on Fairness,
  Accountability, and Transparency}, pages 489--503, 2021.

\bibitem[Fiscella and Fremont(2006)]{fiscella2006use}
Kevin Fiscella and Allen~M Fremont.
\newblock Use of geocoding and surname analysis to estimate race and ethnicity.
\newblock \emph{Health services research}, 41\penalty0 (4p1):\penalty0
  1482--1500, 2006.

\bibitem[Francis(1967)]{francis1967some}
Richard~L Francis.
\newblock Some aspects of a minimax location problem.
\newblock \emph{Operations Research}, 15\penalty0 (6):\penalty0 1163--1169,
  1967.

\bibitem[Frank(2010)]{frank2010uci}
A~Frank.
\newblock Uci machine learning repository. irvine, ca: University of
  california, school of information and computer science.
\newblock \emph{http://archive. ics. uci. edu/ml}, 2010.

\bibitem[Frost et~al.(2020)Frost, Moshkovitz, and Rashtchian]{frost2020exkmc}
Nave Frost, Michal Moshkovitz, and Cyrus Rashtchian.
\newblock Exkmc: Expanding explainable $ k $-means clustering.
\newblock \emph{arXiv preprint arXiv:2006.02399}, 2020.

\bibitem[Gale and Gillespie(2001)]{gale2001diabetes}
Edwin~AM Gale and Kathleen~M Gillespie.
\newblock Diabetes and gender.
\newblock \emph{Diabetologia}, 44:\penalty0 3--15, 2001.

\bibitem[Galv{\~a}o and Raggi(1989)]{galvao1989method}
Roberto~Di{\'e}guez Galv{\~a}o and Luiz~Aur{\'e}lio Raggi.
\newblock A method for solving to optimality uncapacitated location problems.
\newblock \emph{Annals of Operations Research}, 18\penalty0 (1):\penalty0
  225--244, 1989.

\bibitem[Gebru et~al.(2021)Gebru, Morgenstern, Vecchione, Vaughan, Wallach,
  Iii, and Crawford]{gebru2021datasheets}
Timnit Gebru, Jamie Morgenstern, Briana Vecchione, Jennifer~Wortman Vaughan,
  Hanna Wallach, Hal~Daum{\'e} Iii, and Kate Crawford.
\newblock Datasheets for datasets.
\newblock \emph{Communications of the ACM}, 64\penalty0 (12):\penalty0 86--92,
  2021.

\bibitem[Ghadiri et~al.(2021)Ghadiri, Samadi, and Vempala]{ghadiri2021socially}
Mehrdad Ghadiri, Samira Samadi, and Santosh Vempala.
\newblock Socially fair k-means clustering.
\newblock In \emph{Proceedings of the 2021 ACM Conference on Fairness,
  Accountability, and Transparency}, pages 438--448, 2021.

\bibitem[Hakimi(1964)]{hakimi1964optimum}
S~Louis Hakimi.
\newblock Optimum locations of switching centers and the absolute centers and
  medians of a graph.
\newblock \emph{Operations research}, 12\penalty0 (3):\penalty0 450--459, 1964.

\bibitem[Han et~al.(2011)Han, Kamber, and Pei]{han2011data}
Jiawei Han, Micheline Kamber, and Jian Pei.
\newblock Data mining concepts and techniques third edition.
\newblock \emph{The Morgan Kaufmann Series in Data Management Systems},
  5\penalty0 (4):\penalty0 83--124, 2011.

\bibitem[Hardt et~al.(2016)Hardt, Megiddo, Papadimitriou, and
  Wootters]{hardt2016strategic}
Moritz Hardt, Nimrod Megiddo, Christos Papadimitriou, and Mary Wootters.
\newblock Strategic classification.
\newblock In \emph{Proceedings of the 2016 ACM conference on innovations in
  theoretical computer science}, pages 111--122, 2016.

\bibitem[Harris et~al.(2019)Harris, Pensyl, Srinivasan, and
  Trinh]{harris2019lottery}
David~G Harris, Thomas Pensyl, Aravind Srinivasan, and Khoa Trinh.
\newblock A lottery model for center-type problems with outliers.
\newblock \emph{ACM Transactions on Algorithms (TALG)}, 15\penalty0
  (3):\penalty0 1--25, 2019.

\bibitem[Hashimoto et~al.(2018)Hashimoto, Srivastava, Namkoong, and
  Liang]{hashimoto2018fairness}
Tatsunori Hashimoto, Megha Srivastava, Hongseok Namkoong, and Percy Liang.
\newblock Fairness without demographics in repeated loss minimization.
\newblock In \emph{International Conference on Machine Learning}, pages
  1929--1938. PMLR, 2018.

\bibitem[Heidari et~al.(2018)Heidari, Ferrari, Gummadi, and
  Krause]{heidari2018fairness}
Hoda Heidari, Claudio Ferrari, Krishna Gummadi, and Andreas Krause.
\newblock Fairness behind a veil of ignorance: A welfare analysis for automated
  decision making.
\newblock \emph{Advances in neural information processing systems}, 31, 2018.

\bibitem[Heidari et~al.(2019)Heidari, Nanda, and Gummadi]{heidari2019long}
Hoda Heidari, Vedant Nanda, and Krishna~P Gummadi.
\newblock On the long-term impact of algorithmic decision policies: Effort
  unfairness and feature segregation through social learning.
\newblock \emph{arXiv preprint arXiv:1903.01209}, 2019.

\bibitem[Holstein et~al.(2019)Holstein, Wortman~Vaughan, Daum{\'e}~III, Dudik,
  and Wallach]{holstein2019improving}
Kenneth Holstein, Jennifer Wortman~Vaughan, Hal Daum{\'e}~III, Miro Dudik, and
  Hanna Wallach.
\newblock Improving fairness in machine learning systems: What do industry
  practitioners need?
\newblock In \emph{Proceedings of the 2019 CHI conference on human factors in
  computing systems}, pages 1--16, 2019.

\bibitem[Hotegni et~al.(2023)Hotegni, Mahabadi, and
  Vakilian]{hotegni2023approximation}
Sedjro~Salomon Hotegni, Sepideh Mahabadi, and Ali Vakilian.
\newblock Approximation algorithms for fair range clustering.
\newblock In \emph{International Conference on Machine Learning}, pages
  13270--13284. PMLR, 2023.

\bibitem[Hu and Chen(2020)]{hu2020fair}
Lily Hu and Yiling Chen.
\newblock Fair classification and social welfare.
\newblock In \emph{Proceedings of the 2020 conference on fairness,
  accountability, and transparency}, pages 535--545, 2020.

\bibitem[Jain and Dubes(1988)]{jain1988algorithms}
Anil~K Jain and Richard~C Dubes.
\newblock \emph{Algorithms for clustering data}.
\newblock Prentice-Hall, Inc., 1988.

\bibitem[Jain et~al.(1999)Jain, Murty, and Flynn]{jain1999data}
Anil~K Jain, M~Narasimha Murty, and Patrick~J Flynn.
\newblock Data clustering: a review.
\newblock \emph{ACM computing surveys (CSUR)}, 31\penalty0 (3):\penalty0
  264--323, 1999.

\bibitem[James et~al.(2013)James, Witten, Hastie, Tibshirani,
  et~al.]{james2013introduction}
Gareth James, Daniela Witten, Trevor Hastie, Robert Tibshirani, et~al.
\newblock \emph{An introduction to statistical learning}, volume 112.
\newblock Springer, 2013.

\bibitem[Jung et~al.(2019)Jung, Kannan, and Lutz]{jung2019center}
Christopher Jung, Sampath Kannan, and Neil Lutz.
\newblock A center in your neighborhood: Fairness in facility location.
\newblock 2019.

\bibitem[Kellerhals and Peters(2023)]{kellerhals2023proportional}
Leon Kellerhals and Jannik Peters.
\newblock Proportional fairness in clustering: A social choice perspective.
\newblock \emph{arXiv preprint arXiv:2310.18162}, 2023.

\bibitem[Khuller and Sussmann(2000)]{khuller2000capacitated}
Samir Khuller and Yoram~J Sussmann.
\newblock The capacitated k-center problem.
\newblock \emph{SIAM Journal on Discrete Mathematics}, 13\penalty0
  (3):\penalty0 403--418, 2000.

\bibitem[Kleinberg(2002)]{kleinberg2002impossibility}
Jon Kleinberg.
\newblock An impossibility theorem for clustering.
\newblock \emph{Advances in neural information processing systems}, 15, 2002.

\bibitem[Kleinberg et~al.(2016)Kleinberg, Mullainathan, and
  Raghavan]{kleinberg2016inherent}
Jon Kleinberg, Sendhil Mullainathan, and Manish Raghavan.
\newblock Inherent trade-offs in the fair determination of risk scores.
\newblock \emph{arXiv preprint arXiv:1609.05807}, 2016.

\bibitem[Kleindessner et~al.(2019{\natexlab{a}})Kleindessner, Awasthi, and
  Morgenstern]{kleindessner2019fair}
Matth{\"a}us Kleindessner, Pranjal Awasthi, and Jamie Morgenstern.
\newblock Fair k-center clustering for data summarization.
\newblock 2019{\natexlab{a}}.

\bibitem[Kleindessner et~al.(2019{\natexlab{b}})Kleindessner, Samadi, Awasthi,
  and Morgenstern]{kleindessner2019guarantees}
Matth{\"a}us Kleindessner, Samira Samadi, Pranjal Awasthi, and Jamie
  Morgenstern.
\newblock Guarantees for spectral clustering with fairness constraints.
\newblock In \emph{International Conference on Machine Learning}, pages
  3458--3467. PMLR, 2019{\natexlab{b}}.

\bibitem[Knittel et~al.(2023)Knittel, Springer, Dickerson, and
  Hajiaghayi]{knittel2023generalized}
Marina Knittel, Max Springer, John~P Dickerson, and MohammadTaghi Hajiaghayi.
\newblock Generalized reductions: making any hierarchical clustering fair and
  balanced with low cost.
\newblock In \emph{International Conference on Machine Learning}, pages
  17218--17242. PMLR, 2023.

\bibitem[Krishnaswamy et~al.(2018)Krishnaswamy, Li, and
  Sandeep]{krishnaswamy2018constant}
Ravishankar Krishnaswamy, Shi Li, and Sai Sandeep.
\newblock Constant approximation for k-median and k-means with outliers via
  iterative rounding.
\newblock In \emph{Proceedings of the 50th annual ACM SIGACT symposium on
  theory of computing}, pages 646--659, 2018.

\bibitem[Kuehn and Hamburger(1963)]{kuehn1963heuristic}
Alfred~A Kuehn and Michael~J Hamburger.
\newblock A heuristic program for locating warehouses.
\newblock \emph{Management science}, 9\penalty0 (4):\penalty0 643--666, 1963.

\bibitem[Laporte et~al.(2019)Laporte, Nickel, and Saldanha-da
  Gama]{laporte2019introduction}
Gilbert Laporte, Stefan Nickel, and Francisco Saldanha-da Gama.
\newblock \emph{Introduction to location science}.
\newblock Springer, 2019.

\bibitem[Lattimore and Szepesv{\'a}ri(2020)]{lattimore2020bandit}
Tor Lattimore and Csaba Szepesv{\'a}ri.
\newblock \emph{Bandit algorithms}.
\newblock Cambridge University Press, 2020.

\bibitem[Legato et~al.(2006)Legato, Gelzer, Goland, Ebner, Rajan, Villagra,
  Kosowski, et~al.]{legato2006gender}
Marianne~J Legato, Andrea Gelzer, Robin Goland, Susana~A Ebner, Sabitha Rajan,
  Victor Villagra, Mark Kosowski, et~al.
\newblock Gender-specific care of the patient with diabetes: review and
  recommendations.
\newblock \emph{Gender medicine}, 3\penalty0 (2):\penalty0 131--158, 2006.

\bibitem[Liu et~al.(2018)Liu, Dean, Rolf, Simchowitz, and
  Hardt]{liu2018delayed}
Lydia~T Liu, Sarah Dean, Esther Rolf, Max Simchowitz, and Moritz Hardt.
\newblock Delayed impact of fair machine learning.
\newblock In \emph{International Conference on Machine Learning}, pages
  3150--3158. PMLR, 2018.

\bibitem[Liu et~al.(2020)Liu, Wilson, Haghtalab, Kalai, Borgs, and
  Chayes]{liu2020disparate}
Lydia~T Liu, Ashia Wilson, Nika Haghtalab, Adam~Tauman Kalai, Christian Borgs,
  and Jennifer Chayes.
\newblock The disparate equilibria of algorithmic decision making when
  individuals invest rationally.
\newblock In \emph{Proceedings of the 2020 Conference on Fairness,
  Accountability, and Transparency}, pages 381--391, 2020.

\bibitem[Liu et~al.(2011)Liu, Yang, Ramsay, Li, and Coid]{liu2011comparison}
Yuan~Y Liu, Min Yang, Malcolm Ramsay, Xiao~S Li, and Jeremy~W Coid.
\newblock A comparison of logistic regression, classification and regression
  tree, and neural networks models in predicting violent re-offending.
\newblock \emph{Journal of Quantitative Criminology}, 27:\penalty0 547--573,
  2011.

\bibitem[Long and Albert(2021)]{long2021use}
Kevin~D Long and Steven~M Albert.
\newblock Use of zip code based aggregate indicators to assess race disparities
  in covid-19.
\newblock \emph{Ethnicity \& Disease}, 31\penalty0 (3):\penalty0 399, 2021.

\bibitem[Makarychev and Shan(2021)]{makarychev2021near}
Konstantin Makarychev and Liren Shan.
\newblock Near-optimal algorithms for explainable k-medians and k-means.
\newblock In \emph{International Conference on Machine Learning}, pages
  7358--7367. PMLR, 2021.

\bibitem[Makarychev and Vakilian(2021)]{makarychev2021approximation}
Yury Makarychev and Ali Vakilian.
\newblock Approximation algorithms for socially fair clustering.
\newblock In \emph{Conference on Learning Theory}, pages 3246--3264. PMLR,
  2021.

\bibitem[McElfresh et~al.(2019)McElfresh, Bidkhori, and
  Dickerson]{mcelfresh2019scalable}
Duncan~C McElfresh, Hoda Bidkhori, and John~P Dickerson.
\newblock Scalable robust kidney exchange.
\newblock In \emph{Proceedings of the AAAI Conference on Artificial
  Intelligence}, volume~33, pages 1077--1084, 2019.

\bibitem[Meila(2016)]{meila2016spectral}
Marina Meila.
\newblock Spectral clustering: a tutorial for the 2010’s.
\newblock \emph{Handbook of cluster analysis}, pages 1--23, 2016.

\bibitem[Mladenov et~al.(2020)Mladenov, Creager, Ben-Porat, Swersky, Zemel, and
  Boutilier]{mladenov2020optimizing}
Martin Mladenov, Elliot Creager, Omer Ben-Porat, Kevin Swersky, Richard Zemel,
  and Craig Boutilier.
\newblock Optimizing long-term social welfare in recommender systems: A
  constrained matching approach.
\newblock In \emph{International Conference on Machine Learning}, pages
  6987--6998. PMLR, 2020.

\bibitem[Patro et~al.(2022)Patro, Porcaro, Mitchell, Zhang, Zehlike, and
  Garg]{patro2022fair}
Gourab~K Patro, Lorenzo Porcaro, Laura Mitchell, Qiuyue Zhang, Meike Zehlike,
  and Nikhil Garg.
\newblock Fair ranking: a critical review, challenges, and future directions.
\newblock In \emph{2022 ACM Conference on Fairness, Accountability, and
  Transparency}, pages 1929--1942, 2022.

\bibitem[Perdomo et~al.(2020)Perdomo, Zrnic, Mendler-D{\"u}nner, and
  Hardt]{perdomo2020performative}
Juan Perdomo, Tijana Zrnic, Celestine Mendler-D{\"u}nner, and Moritz Hardt.
\newblock Performative prediction.
\newblock In \emph{International Conference on Machine Learning}, pages
  7599--7609. PMLR, 2020.

\bibitem[Prost et~al.(2021)Prost, Awasthi, Blumm, Kumthekar, Potter, Wei, Wang,
  Chi, Chen, and Beutel]{prost2021measuring}
Flavien Prost, Pranjal Awasthi, Nick Blumm, Aditee Kumthekar, Trevor Potter,
  Li~Wei, Xuezhi Wang, Ed~H Chi, Jilin Chen, and Alex Beutel.
\newblock Measuring model fairness under noisy covariates: A theoretical
  perspective.
\newblock In \emph{Proceedings of the 2021 AAAI/ACM Conference on AI, Ethics,
  and Society}, pages 873--883, 2021.

\bibitem[Purohit et~al.(2019)Purohit, Gollapudi, and
  Raghavan]{purohit2019hiring}
Manish Purohit, Sreenivas Gollapudi, and Manish Raghavan.
\newblock Hiring under uncertainty.
\newblock In \emph{International Conference on Machine Learning}, pages
  5181--5189. PMLR, 2019.

\bibitem[Raghavan et~al.(2020)Raghavan, Barocas, Kleinberg, and
  Levy]{raghavan2020mitigating}
Manish Raghavan, Solon Barocas, Jon Kleinberg, and Karen Levy.
\newblock Mitigating bias in algorithmic hiring: Evaluating claims and
  practices.
\newblock In \emph{ACM Conference on Fairness, Accountability, and
  Transparency}, 2020.

\bibitem[Rawls(1958)]{rawls1958justice}
John Rawls.
\newblock Justice as fairness.
\newblock \emph{The philosophical review}, 67\penalty0 (2):\penalty0 164--194,
  1958.

\bibitem[R{\"o}sner and Schmidt(2018)]{rosner2018privacy}
Clemens R{\"o}sner and Melanie Schmidt.
\newblock Privacy preserving clustering with constraints.
\newblock \emph{arXiv preprint arXiv:1802.02497}, 2018.

\bibitem[Rudin(2019)]{rudin2019stop}
Cynthia Rudin.
\newblock Stop explaining black box machine learning models for high stakes
  decisions and use interpretable models instead.
\newblock \emph{Nature machine intelligence}, 1\penalty0 (5):\penalty0
  206--215, 2019.

\bibitem[Saha et~al.(2020)Saha, Schumann, Mcelfresh, Dickerson, Mazurek, and
  Tschantz]{saha2020measuring}
Debjani Saha, Candice Schumann, Duncan Mcelfresh, John Dickerson, Michelle
  Mazurek, and Michael Tschantz.
\newblock Measuring non-expert comprehension of machine learning fairness
  metrics.
\newblock In \emph{International Conference on Machine Learning}, pages
  8377--8387. PMLR, 2020.

\bibitem[Schmidt et~al.(2019)Schmidt, Schwiegelshohn, and
  Sohler]{schmidt2018fair}
Melanie Schmidt, Chris Schwiegelshohn, and Christian Sohler.
\newblock Fair coresets and streaming algorithms for fair k-means clustering.
\newblock 2019.

\bibitem[Selbst et~al.(2019)Selbst, Boyd, Friedler, Venkatasubramanian, and
  Vertesi]{selbst2019fairness}
Andrew~D Selbst, Danah Boyd, Sorelle~A Friedler, Suresh Venkatasubramanian, and
  Janet Vertesi.
\newblock Fairness and abstraction in sociotechnical systems.
\newblock In \emph{Proceedings of the conference on fairness, accountability,
  and transparency}, pages 59--68, 2019.

\bibitem[Shalev-Shwartz and Ben-David(2014)]{shalev2014understanding}
Shai Shalev-Shwartz and Shai Ben-David.
\newblock \emph{Understanding machine learning: From theory to algorithms}.
\newblock Cambridge university press, 2014.

\bibitem[Shmoys et~al.(2004)Shmoys, Swamy, and Levi]{shmoys2004facility}
David~B Shmoys, Chaitanya Swamy, and Retsef Levi.
\newblock Facility location with service installation costs.
\newblock In \emph{Proceedings of the fifteenth annual ACM-SIAM symposium on
  Discrete algorithms}, pages 1088--1097, 2004.

\bibitem[Smith et~al.(2002)Smith, Bivens, Embrechts, Palagiri, and
  Szymanski]{smith2002clustering}
Rasheda Smith, Alan Bivens, Mark Embrechts, Chandrika Palagiri, and Boleslaw
  Szymanski.
\newblock Clustering approaches for anomaly based intrusion detection.
\newblock \emph{Proceedings of intelligent engineering systems through
  artificial neural networks}, 9, 2002.

\bibitem[Spanakis and Golden(2013)]{spanakis2013race}
Elias~K Spanakis and Sherita~Hill Golden.
\newblock Race/ethnic difference in diabetes and diabetic complications.
\newblock \emph{Current diabetes reports}, 13:\penalty0 814--823, 2013.

\bibitem[Tan et~al.(2018)Tan, Steinbach, Karpatne, and
  Kumar]{tan2018introduction}
Pang-Ning Tan, Michael Steinbach, DA~Karpatne, and DV~Kumar.
\newblock Introduction to data mining , 2nd editio, 2018.

\bibitem[Taskesen et~al.(2020)Taskesen, Nguyen, Kuhn, and
  Blanchet]{taskesen2020distributionally}
Bahar Taskesen, Viet~Anh Nguyen, Daniel Kuhn, and Jose Blanchet.
\newblock A distributionally robust approach to fair classification.
\newblock \emph{arXiv preprint arXiv:2007.09530}, 2020.

\bibitem[Tejaswini et~al.(2020)Tejaswini, Kavya, Ramya, Triveni, and
  Maddumala]{tejaswini2020accurate}
J~Tejaswini, T~Mohana Kavya, R~Devi~Naga Ramya, P~Sai Triveni, and Venkata~Rao
  Maddumala.
\newblock Accurate loan approval prediction based on machine learning approach.
\newblock \emph{Journal of Engineering Science}, 11\penalty0 (4):\penalty0
  523--532, 2020.

\bibitem[{United States Senate}(1991)]{CivilRights1991}
{United States Senate}.
\newblock S.\ 1745 -- 102nd {C}ongress: {C}ivil {R}ights {A}ct of 199, 1991.
\newblock \newline\url{https://www.govtrack.us/congress/bills/102/s1745}.

\bibitem[Vergin and Rogers(1967)]{vergin1967algorithm}
Roger~C Vergin and Jack~D Rogers.
\newblock An algorithm and computational procedure for locating economic
  facilities.
\newblock \emph{Management Science}, 13\penalty0 (6):\penalty0 B--240, 1967.

\bibitem[Von~Luxburg(2007)]{von2007tutorial}
Ulrike Von~Luxburg.
\newblock A tutorial on spectral clustering.
\newblock \emph{Statistics and computing}, 17:\penalty0 395--416, 2007.

\bibitem[Von~Luxburg et~al.(2012)Von~Luxburg, Williamson, and
  Guyon]{von2012clustering}
Ulrike Von~Luxburg, Robert~C Williamson, and Isabelle Guyon.
\newblock Clustering: Science or art?
\newblock In \emph{Proceedings of ICML workshop on unsupervised and transfer
  learning}, pages 65--79. JMLR Workshop and Conference Proceedings, 2012.

\bibitem[Wang et~al.(2020)Wang, Guo, Narasimhan, Cotter, Gupta, and
  Jordan]{wang2020robust}
Serena Wang, Wenshuo Guo, Harikrishna Narasimhan, Andrew Cotter, Maya Gupta,
  and Michael~I Jordan.
\newblock Robust optimization for fairness with noisy protected groups.
\newblock \emph{arXiv preprint arXiv:2002.09343}, 2020.

\bibitem[Xu and Zhang(2008)]{xu2008approximation}
Dachuan Xu and Shuzhong Zhang.
\newblock Approximation algorithm for facility location with service
  installation costs.
\newblock \emph{Operations Research Letters}, 36\penalty0 (1):\penalty0 46--50,
  2008.

\bibitem[Yaghini et~al.(2021)Yaghini, Krause, and Heidari]{yaghini2021human}
Mohammad Yaghini, Andreas Krause, and Hoda Heidari.
\newblock A human-in-the-loop framework to construct context-aware mathematical
  notions of outcome fairness.
\newblock In \emph{Proceedings of the 2021 AAAI/ACM Conference on AI, Ethics,
  and Society}, pages 1023--1033, 2021.

\bibitem[Yin et~al.(2023)Yin, Raab, Liu, and Liu]{yin2023long}
Tongxin Yin, Reilly Raab, Mingyan Liu, and Yang Liu.
\newblock Long-term fairness with unknown dynamics.
\newblock \emph{arXiv preprint arXiv:2304.09362}, 2023.

\bibitem[Zimek(2009)]{zimek2009correlation}
Arthur Zimek.
\newblock Correlation clustering.
\newblock \emph{ACM SIGKDD Explorations Newsletter}, 11\penalty0 (1):\penalty0
  53--54, 2009.

\end{thebibliography}
\bibliographystyle{plainnat}

\end{document}